\newcommand\nnfootnote[2]{%
  \begin{NoHyper}
  \renewcommand\thefootnote{#1}\footnotetext{#2}%
  \end{NoHyper}
}
\theoremstyle{plain}
\newtheorem{theorem}{Theorem}
\newtheorem{lemma}{Lemma}
\newtheorem{corollary}{Corollary}
\newcommand{\ifrac}[2]{\mathchoice{\frac{#1}{#2}}{#1/#2}{#1/#2}{#1/#2}}
\newcommand{\diag}{\operatorname{diag}}
\newcommand{\tr}{^{\mathsf{T}}}
\newcommand{\argmin}{\operatorname*{arg\,min}}
\newcommand{\argmax}{\operatorname*{arg\,max}}
\newcommand{\Otilde}{\smash{\widetilde{\mathcal{O}}}}
\newcommand{\Omegatilde}{\smash{\widetilde{\Omega}}}
\newcommand{\Thetatilde}{\smash{\widetilde{\Theta}}}
\newcommand{\dotp}{\boldsymbol{\cdot}}
\newcommand{\R}{\mathbb{R}}
\newcommand{\E}{\mathbb{E}}
\newcommand{\eqdef}{\triangleq}
\newcommand{\calI}{\mathcal{I}}
\newcommand{\calX}{\mathcal{X}}
\newcommand{\calA}{\mathcal{A}}
\newcommand{\calD}{\mathcal{D}}
\newcommand{\calH}{\mathcal{H}}
\newcommand{\calY}{\mathcal{Y}}
\newcommand{\calF}{\mathcal{F}}
\newcommand{\regret}{\mathcal{R}}
\newcommand{\eps}{\varepsilon}
\newcommand{\y}{y}
\renewcommand{\a}{a}
\newcommand{\hatc}{\hat{c}}
\newcommand{\Alg}{Alg}
\newcommand{\grad}{g}
\newcommand{\W}{\mathcal{W}}
\newcommand{\prob}{\mathbb{P}}
\newcommand{\KL}{\mathsf{KL}}
\newcommand{\alg}{\mathsf{ALG}}
\title{The Real Price of Bandit Information in Multiclass Classification}
\author{
Liad Erez$^{*}$
\and
Alon Cohen$^{*,\dag}$
\and
Tomer Koren$^{*,\dag}$
\and
Yishay Mansour$^{*,\dag}$
\and
Shay Moran$^{\dag,\ddag}$
}
\begin{document}

\maketitle

\nnfootnote{*}{ Blavatnik School of Computer Science, Tel Aviv University, Tel Aviv, Israel.}
\nnfootnote{\textdagger}{ Google Research Tel Aviv, Israel.}
\nnfootnote{\textdaggerdbl}{Departments of Mathematics and Computer Science, Technion, Haifa, Israel.}

\begin{abstract}%
We revisit the classical problem of multiclass classification with bandit feedback~\citep*{kakade2008efficient}, where each input classifies to one of $K$ possible labels and feedback is restricted to whether the predicted label is correct or not.
Our primary inquiry is with regard to the dependency on the number of labels $K$, and whether $T$-step regret bounds in this setting can be improved beyond the $\smash{\sqrt{KT}}$ dependence exhibited by existing algorithms. 
Our main contribution is in showing that the minimax regret of bandit multiclass is in fact more nuanced, and is of the form $\smash{\widetilde{\Theta}\brk{\min \brk[c]{|\calH| + \sqrt{T}, \sqrt{KT \log \abs{\calH}}}}}$, where $\calH$ is the underlying (finite) hypothesis class.
In particular, 
we present a new bandit classification algorithm that guarantees regret $\smash{\widetilde{O}(|\calH|+\sqrt{T})}$, improving over classical algorithms for moderately-sized hypothesis classes, and give a matching lower bound 
establishing tightness of the upper bounds (up to log-factors) in all parameter regimes.
\end{abstract}

\allowdisplaybreaks

\section{Introduction}

Online multiclass classification is an important and fundamental learning problem, which has practical relevance in a variety of applications. This is a sequential problem, where in any given round $t=1,2,
\ldots T$ the learner receives an example $x_t$ and is tasked with predicting a label $y_t'$ from a set of~$K$ possible labels, after which the true label $y_t$ is revealed to the learner who suffers a loss if $y_t' \neq y_t$. 
The learner's performance is measured with respect to an underlying \emph{hypothesis class} $\calH$ of mappings from examples to labels, with the objective of minimizing the \emph{regret}, that is, the total loss of the learner compared to that of the best hypothesis in $\calH$.

In \emph{bandit multiclass classification}~\citep{kakade2008efficient}, rather than observing the true label after each prediction (a setting which is often referred to as \emph{full information}), the learner only observes whether the prediction was correct or not (this is reminiscent of \emph{bandit information} in the context of online learning). 
As a practical example, consider a classification task over the ImageNet dataset, where a learner is tasked with classifying images into one of $K > 10000$ possible labels. After the learner makes a prediction, the image and the prediction are shown to a human rater that is asked whether or not the prediction is correct. If we consider the rater as a part of the environment, the learner essentially faces a bandit multiclass classification problem. 
Notably, while a given image may have multiple correct labels, this number is usually small (say, at most $10$) and does not scale with $K$.

The bandit multiclass classification problem has been extensively studied since the early work of \citealt{kakade2008efficient} \citep[e.g., ][]{daniely2011multiclass,daniely2013price,long2017new,raman2023multiclass}, where the primary focus has been on characterizing the price of bandit information as compared to the standard full-information setting.
Several of these works placed a particular focus on characterizing the properties of $\calH$ under which regret minimization is at all possible (i.e., render the problem learnable), and have introduced refinements of the best known regret bound of $O \brk{\sqrt{KT \log \abs{\calH}}}$ (that can be extracted from \citealp{auer2002nonstochastic}) in terms of various structural properties of the class~$\calH$; for instance, \cite{daniely2013price} proved a bound which scales with the Littlestone dimension of $\calH$ rather than $\log \abs{\calH}$, while \cite{raman2023multiclass} replaced the dependence on $K$ by a quantity called the ``Bandit Littlestone dimension'' of $\calH$, which, for general classes can be as large as $\abs{\calH}$.

Despite this abundance of previous work on the fundamental bandit multiclass problem, it remains unclear what is the correct dependence of the regret on the number of labels, $K$, even for finite hypothesis classes. 
Drawing analogy to the vast literature on bandit problems, one could conjecture that the right dependence should be roughly $\sqrt{KT}$, as is the case in stochastic and non-stochastic $K$-armed bandit problems and their contextual counterparts~\citep{auer2002nonstochastic,Tor-Csaba-book}, which is indeed the case in all existing upper bounds for bandit multiclass.
However, a close inspection of the available lower bounds for bandit multiclass classification~\citep{daniely2013price}, reveals that they can only rule out bounds better than  $\sqrt{KT}$ in the \emph{multi-label setting}, where each example may be labeled with a multitude of labels rather than just one or a few (the lower bounds require that $\Theta(K)$ correct labels are possible).
When considering the canonical single-label setting exclusively, which has a significant ``sparsity'' structure compared to a general bandit problem, the only available lower bound becomes the $\Omegatilde(\sqrt{T})$ rate of the full-information case.

Thus, the following question remains unresolved: \emph{what is the real price of bandit information in (standard, single-label) multiclass classification?}%
\footnote{Henceforth, we focus on the classical single-label setup where each example has a single, unique correct label, but we note that all of our results are equally valid in the case where each example may have at most a constant number of correct labels.}
In this work, we provide a complete answer to this question by fully characterizing the minimax regret in single-label bandit multiclass classification over finite hypothesis classes (up to logarithmic factors). 
Concretely, we show that the minimax regret for this problem is of the form
\begin{align*}
    \Thetatilde \brk!{\min \brk{\abs{\calH} + \sqrt{T}, \sqrt{KT \log \abs{\calH}}}}.
\end{align*}
In particular, this bound improves upon known algorithms  \citep[such as EXP4, ][]{auer2002nonstochastic} for hypothesis class of moderate size, that is, $\abs{\calH} \lesssim \sqrt{KT}$. 
Somewhat surprisingly, in the latter case there is therefore no price for the bandit information: namely, the (leading $T$-dependent term in the) minimax rate become identical, up to logarithmic factors, to the rate in the full-information case of the problem. 
On the flip side, for larger classes, our main result shows that the price of bandit information is fully pronounced (i.e., a $\sqrt{KT}$ dependence is unavoidable) despite the inherent ``sparsity'' structure of the single-label setting.

\subsection{Summary of our contributions}

In some more detail, our main results in this paper are the following:

\begin{itemize}[leftmargin=*]
    \item We describe and analyze an algorithm which is an instantiation of the follow-the-regularized-leader framework with a regularization that combines negative entropy and log-barrier components (see \cref{sec:upper}). Our analysis shows that this algorithm achieves the expected regret bound stated above in general adversarial environments. In fact, our result holds in a more general $s$-sparse contextual bandits setup (see \cref{sec:upper} for a formal description of the problem setup) over a finite policy class $\Pi$ of mappings from contexts to actions, in which our algorithm obtains a regret bound of
    \begin{align*}
        \Otilde \brk!{\min \brk[c]{\abs{\Pi} + \sqrt{sT}, \sqrt{KT \log \abs{\Pi}}}}.
    \end{align*}
    In the case of bandit multiclass, the sparsity is $s=1$ which yields the upper bound stated above. 
    (The same result holds true if the maximal number of correct labels is any other constant that does not scale with $K$.)
    
    \item We prove a lower bound establishing the fact that the regret guarantees provided by our algorithm are essentially optimal (see \cref{sec:lower}). Our construction is of a stochastic i.i.d.\ bandit multiclass classification instance, implying that this guarantee cannot be improved even when the environment is stochastic rather than adversarial. Specifically, we show that for any bandit multiclass classification algorithm there exists a stochastic instance on which it must incur an expected regret of at least
    \begin{align*}
        \Omegatilde \brk!{\min \brk[c]{\abs{\calH} + \sqrt{T}, \sqrt{KT}}}.
    \end{align*}
    This bound implies that our new bandit classification algorithm is tight for moderately-sized hypothesis classes, while for larger classes the $\Otilde(\sqrt{KT})$ bound attained by EXP4 cannot be improved, despite the inherent sparsity structure in the problem.
\end{itemize}

\subsection{Overview of main ideas and techniques}

One of the main challenges in obtaining improved regret bounds for bandit multiclass classification comes in taking advantage of structural properties which are absent in more general bandit scenarios, where regret bounds of the form $\sqrt{KT}$ are optimal.
One basic observation is that bandit multiclass classification is essentially a special case of \emph{contextual bandits} \citep{auer2002nonstochastic, beygelzimer2011contextual, agarwal2014taming} with additional important structural properties of the loss functions. Namely, the loss functions in bandit multiclass classification are given by the \emph{zero-one loss}, that is, a unit loss for an incorrect prediction and zero otherwise. In the setting where each example has a unique correct label, the losses exhibit a certain sparsity property. We are thus motivated to investigate whether it is possible to take advantage of the sparse structure of the loss functions in this setting in order to obtain guarantees that improve upon those given for contextual bandits in general.

This type of sparsity in multi-armed bandits has been a topic of interest in several previous works \citep{bubeck2018sparsity, audibert2009exploration}, where it has been shown that the optimal regret bound of $\Otilde \brk{\sqrt{KT}}$ can be improved to $\Otilde \brk{K + \sqrt{sT}}$ if the loss vectors are $s$-sparse, that is, every loss vector is bounded in $\ell_2$-norm by $s$. These results motivated the investigation of whether or not such improvements are possible in the more general bandit multiclass classification setup. That is, whether or not we can obtain regret bounds of the form $\Otilde \brk{K + \sqrt{T \log \abs{\calH}}}$, since in this case the loss vectors are $1$-sparse.
Perhaps surprisingly, it turns out that achieving regret bounds of the form $K + \sqrt{sT}$ is not possible in general in bandit multiclass classification. However, by applying an instantiation of \emph{follow-the-regularized-leader} (FTRL) with a log-barrier component in the contextual bandit setting we can establish a regret bound of $\Otilde \brk{\abs{\Pi} + \sqrt{sT}}$ when the losses are $s$-sparse, which in turn implies a bound of $\Otilde \brk{\abs{\calH} + \sqrt{T}}$ for bandit multiclass classification. This turns out to be optimal, up to logarithmic factors, when the hypothesis class is not too large, that is, $\abs{\calH} \lesssim \sqrt{KT}$.

The incorporation of log-barrier regularization in order to stabilize the FTRL iterates turns out to be crucial for our approach. The reason stems from the fact that when working with a shifted version of the zero-one loss functions (which is required for sparsity to hold), the loss values are necessarily non-positive, which is notoriously more challenging compared to nonnegative losses \citep{kwon2016gains}. Together with the fact that the learner observes bandit feedback, the standard importance-weighted loss estimators become both negative and extremely large in magnitude, and as a result, existing algorithms for contextual bandits such as EXP4 \citep{auer2002nonstochastic} become unstable. The addition of a log-barrier component to the regularization aims to resolve this very problem, albeit with the penalty of incurring an additional additive term of $\abs{\calH}$ in the regret bound.

\subsection{Open problems and future work}

In this work, we addressed the fundamental question of characterizing the optimal regret bound in multiclass classification with bandit feedback for finite classes. Specifically, we provided a tight bound that is expressed in terms of the class size $|\mathcal{H}|$ and the number of labels $K$. 
This achievement, however, can be seen as an initial step within a broader research context.  Below, we detail several open problems and natural avenues for future research.

\begin{enumerate}[label=(\roman*)]

\item {\bfseries Structured (possibly infinite) hypothesis classes.}
A logical extension of our work is to refine these bounds by including dependencies on more refined class properties. This methodology has a solid foundation in learning theory; a classical example is the role of the VC dimension \citep{blumer1989learnability} in PAC learning, which refines the dependence on $|\mathcal{H}|$ and yields optimal bounds in terms of sample complexity that are applicable also to infinite classes.

In fact, our lower bound in terms of $\abs{\calH}$ reveals a natural candidate for such a combinatorial parameter. For an integer $\kappa$, define the $\kappa$-list star number of a hypothesis class $\mathcal{H}$ as the maximal $s$ for which there exist $s$ examples $x_1, \ldots, x_s$ and a pivot hypothesis $h_0 \in \mathcal{H}$ such that, for each $x_i$, there are $\kappa$ hypotheses $h_{i,1}, \ldots, h_{i,\kappa}$ that coincide with $h_0$ on every $x_j \neq x_i$ and diverge on $x_i$, with all labels $h_0(x_i), h_{i,1}(x_i), \ldots, h_{i,\kappa}(x_i)$ being distinct. When $\kappa=1$ this parameter specializes to the star-number which characterizes the optimal query complexity in active learning \citep{hanneke2015minimax}.
An adaptation of our lower bound for finite classes shows an $\tilde\Omega(\kappa s + \sqrt{T\log K} )$ lower bound for classes with $\kappa$-list star number $s$, thus positioning the $\kappa$-list star number as a natural barrier. It remains an open question  determine whether it is the only barrier and in particular whether there exist a corresponding upper bound which replaces $\abs{\calH}$ with the $\kappa$-list star number, offering tighter bounds on the  regret which apply also to infinite classes.

\item {\bfseries Computationally efficient algorithms.}
Several previous works on contextual bandits \citep{langford2007epoch, dudik2011efficient, agarwal2014taming} focus on the stochastic i.i.d. setting and aim to achieve low regret with efficient algorithms based on various types of optimization oracles. An interesting question for future research is whether or not we can utilize the sparse nature of bandit multiclass classification in the stochastic setting and obtain more efficient algorithms that guarantee optimal regret in this setting. In the simpler setting of stochastic $K$-armed bandits, variants of optimistic algorithms such as UCB \citep{audibert2009exploration} that are adaptive to the ``sparsity'' of the rewards have been established, but it is not yet clear to us if and how such techniques could be extended to the contextual setup.

\item {\bfseries Tight sample complexity bounds.}
Another possible learning objective in a stochastic multiclass classification setup is a PAC objective. In this setting, the learner does not incur any loss during the learning process, but is tasked with outputting a hypothesis which is nearly optimal with a low \emph{sample complexity}. A straightforward application of an online-to-batch reduction to our online algorithm gives a sample complexity guarantee of $\Otilde \brk*{N / \eps + 1 / \eps^2}$ in the PAC setting. Moreover, the construction of our lower bound can be used to show a sample complexity lower bound of $\Omega \brk*{K / \eps}$, which does not preclude the possibility of having an algorithm in this framework that guarantees finding an $\eps$-optimal hypothesis using $\Otilde \brk{{K}/{\eps} + {1}/{\eps^2}}$ samples (suppressing dependence on $\log \abs{\calH}$) with bandit feedback, which would yield a significant improvement over the bound obtained by the reduction from the online setting when the number of labels is large. Such a result would show in particular that there is a fundamental gap between the PAC and online objectives in bandit multiclass classification.

\end{enumerate}

\subsection{Additional related work}

\paragraph{Bandit multiclass classification.}

In the realizable setting, the optimal mistake bound under bandit feedback is $\tilde{\Theta}(K \log|\mathcal{H}|)$ in the worst case. In fact, even more refined bounds exist where $\log|\mathcal{H}|$ is replaced by the \emph{Littlestone dimension} of $\mathcal{H}$, which equals the optimal mistake bound in the full information setting~\citep{AuerL99,daniely2013price,long2017new,geneson2021note}.
In the agnostic setting, the best known general upper bound is ${\tilde{O}}(\sqrt{KT\log|\mathcal{H}|})$; this bound has been further refined by substituting $\log|\mathcal{H}|$ with the Littlestone dimension of $\mathcal{H}$~\citep{daniely2013price}, and by replacing $K$ with the effective number of labels, defined as $\sup_{x \in \mathcal{X}}|\{h(x) : h \in \mathcal{H}\}|$~\citep{raman2023multiclass}.
These aforementioned studies introduce additional refinements in terms of the \emph{Bandit Littlestone Dimension}, which characterizes the optimal mistake bound in the realizable setting for deterministic learners~\citep{daniely2011multiclass}. However, to the best of our knowledge, our work is the first to offer a general improvement in terms of $K$ in front of the dominant $\sqrt{T}$ term in the regret.

\paragraph{Contextual bandits.}

The bandit version of online multiclass classification is closely related to the contextual bandit problem that has been studied extensively in the online learning literature. 
The EXP4 algorithm by \cite{auer2002nonstochastic} and other variants \citep{mcmahan2009tighter, beygelzimer2011contextual} obtain the optimal instance-independent regret bound of $O\brk{\sqrt{KT \log \abs{\Pi}}}$ in an adversarial environment, however they are not computationally efficient if the underlying policy class is exponentially large in the natural problem parameters (e.g., the dimension of the instance space).
Another line of work \citep{chu2011contextual, filippi2010parametric, li2017provably, foster2018practical, foster2020beyond, foster2020instance} investigates contextual bandit problems from a different point of view, often referred to as a \emph{realizability-based} approach. In this approach, instead of an underlying policy class of mappings from contexts to actions, there is instead a function class $\calF \subseteq \brk[c]{\calX \times \calA \to \R}$ of mappings from context-action pairs to rewards, and it is assumed that there exists $f^\star \in \calF$ which realizes the true expected reward of the actions given the context, an assumption referred to as realizability. In this point of view, the learner's goal is to compete against the policy $\pi^\star$ which selects actions according to $f^\star$ via $\pi^\star(x) = \argmax_{a \in \calA} f^\star(x,a)$. In contrast, we do not assume realizability in any form.

\paragraph{Log-barrier regularization.}
Enhancing the stability of Follow the Regularized Leader (FTRL) and Online Mirror Descent (OMD) online algorithms using log-barrier regularization has been used in various online learning scenarios. In multi-armed bandits, \cite{wei2018more} used a log-barrier regularizer with an optimistic OMD update in order to obtain variation-dependent regret bounds, and  and similar techniques \citep{anagnostides2022uncoupled, anagnostides2023near} have been used in order to obtain improved regret in general sum games. Other works \citep{jin2020simultaneously, jin2021best, erez2021best} took advantage on the iterate stability provided by the log barrier regularization in order to obtain best-of-both-worlds guarantees in RL and bandit scenarios. The use of log-barrier in order to take advantage of sparsity of the loss vectors in $K$-armed bandits can be seen in \cite{bubeck2018sparsity}, and in contextual bandits models a variant of log-barrier has been used in \cite{foster2020adapting} in order to obtain regret bounds for infinite action sets. For the upper bound we present in \cref{sec:upper} we utilize an approach most similar to that of \cite{bubeck2018sparsity} in the sense the role of the log-barrier is to make use of action-level sparsity properties of the loss vectors.

\section{Problem setup}

\paragraph{Bandit multiclass classification.}

We consider a learning setup involving classification of objects from a set of examples $\calX$ with one of $K$ possible labels from the set $\calY \eqdef \{1,\ldots,K\}$. 
An instance of bandit multiclass classification is specified by a hypothesis class $\calH \subseteq \brk[c]*{\calX \to \calY}$ of labeling functions.
Our focus in this paper is on finite classes, and we denote by $N \eqdef \abs{\calH}$ the number of hypotheses in the class.

A learning algorithm operates in the bandit multiclass classification setting according to the following protocol, over prediction rounds $t = 1,2,\ldots$:
\begin{enumerate}[label=(\roman*)]
    \item The environment generates an example-label pair $(x_t, \y_t) \in \calX \times \calY$ and $x_t$ is presented to the algorithm;
    \item The algorithm classifies $x_t$ into one of the $K$ labels by choosing $\y_t' \in \calY$;
    \item The algorithm incurs loss $\ell(\y_t'; \y_t) \eqdef \mathbb{I} \brk[s]{\y_t' \neq \y_t}$, where $\mathbb{I}[\cdot]$ is the indicator function; 
    \item The algorithm observes the loss value $\ell(\y_t',\y_t)$ as feedback, but \emph{not} the true label $\y_t$.
\end{enumerate}

Note that the algorithm only observes bandit feedback at each round, namely, only whether its prediction is correct or not, rather than full feedback as in standard classification settings where the algorithm typically observes the true label $\y_t$. 
We also remark the the setting describe above is that of single-label multiclass, as opposed to the multi-label variant studied in some of the previous work \citep[e.g.,][]{daniely2013price}.%
\footnote{This does not imply that labels are necessarily consistent; e.g., in a stochastic setup the label $y$ might not be deterministic given the example $x$. Both the upper and lower bounds we will derive will be agnostic to this matter (that is, upper bounds will apply in the more general setting, while lower bounds will hold in the specific setting of deterministic labels).}

\paragraph{Learning objective.} 

The goal of the online classification algorithm is to minimize its \emph{expected regret} over $T$ rounds compared to any hypothesis from the class $\calH$, defined by%
\footnote{The quantity defined in \cref{eqn:regret-defn} is often referred to as \emph{pseudo-regret} in the online learning literature.}
\begin{align}
\label{eqn:regret-defn}
    \regret_T (\calH)
    \eqdef 
    \sup_{h^* \in \calH} \sum_{t=1}^T \E \brk[s]!{\ell(\y_t'; \y_t) - \ell \brk{h^*(x_t); \y_t}},
\end{align}
where expectations are taken with respect to any randomization present in the environment and any internal randomization in the algorithm.

\paragraph{Types of environment.} 

We distinguish between two different types of environments with respect to how the example-label pairs $(x_1,\y_1), \ldots (x_T, \y_T)$ are generated.
\begin{itemize}[leftmargin=*]
\item
In the \emph{stochastic setting}, it is assumed that there exists a distribution $\calD$ over $\calX \times \calY$ such that each example-label pair $(x_t, \y_t)$ is sampled i.i.d. from $\calD$.
\item 
In the \emph{adversarial setting}, we assume that the example-label pairs are generated by a (possibly adaptive) adversary, which chooses $(x_t, \y_t)$ based on the entire history up to round $t$.
\end{itemize}

\section{Main algorithm and upper bounds}
\label{sec:upper}

In this section, we describe and analyze an online algorithm which obtains the optimal regret (up to logarithmic factors) for bandit multiclass classification with finite hypothesis classes.
In fact, we design the algorithm in a more general setting of contextual bandits with adversarial sparse losses; the algorithm thus applies to the adversarial variant of bandit multiclass classification and thereby also to its stochastic variant.
Below, we first detail a reduction from bandit multiclass classification to sparse contextual bandits, and later describe an algorithm for the latter problem and its regret analysis.

\subsection{Reduction to Sparse Contextual Bandits}

Our first step is in reducing the bandit multiclass problem into an instance of a Sparse Contextual Bandits problem, defined as follows:
\begin{itemize}[leftmargin=*]
    \item The setup involves prediction over instance space $\calX$ of \emph{contexts}, a finite set of \emph{actions} $\calA$ with $\abs{\calA} = K$ and a finite \emph{policy class} $\Pi \subseteq \brk[c]*{\calX \to \calA}$ of size $\abs{\Pi} = N$. 
    
    \item At each round $t$, the environment generates a context $x_t$ and a corresponding loss vector $\ell_t \in \R^K$ that assigns losses to actions. The learning algorithm is given the context $x_t$, chooses an action $\a_t \in \calA$ (possibly at random) and suffers loss $\ell_t\brk{\a_t}$, which is then observed by the algorithm.
    
    \item The goal of the algorithm is to minimize the expected regret, given by
    \begin{align*}
        \regret_{T} \brk*{\Pi}
        \eqdef 
        \inf_{\pi^* \in \Pi} \brk[c]*{ \E \brk[s]*{ \sum_{t=1}^T  \ell_t(\a_t) } - \sum_{t=1}^T  \ell_t\brk{\pi^*(x_t)} }
        ,
    \end{align*}
    
    \item The instance is said to be $s$-\emph{sparse} (with respect to the $L_2$-norm), if the loss vectors satisfy $\norm{\ell_t}_2^2 \leq s$ for all $t$.%
    \footnote{We refer to this as sparsity since the condition $\norm{\ell_t}_2^2 \leq s$ is satisfied if $\ell_t$ is an $s$-sparse binary vector (with entries in $\brk[c]{0,1}$ and at most $s$ non-zero entries), but it allows for more general loss vectors that are not sparse per-se.}
\end{itemize}

It is straightforward to frame bandit multiclass classification as a $1$-sparse contextual bandits problem, albeit with \emph{negative losses}.  Instances are treated as contexts and labels as possible actions; for each incoming instance-label pair $(x_t,y_t)$ at round $t$, the loss vector at round $t$ is set to $\ell_t \in \brk[c]{-1,0}^K$ such that $\ell_t(a) = -\mathbb{I}(a = y_t)$ for all $a \in [K]$.
This particular assignment of losses renders the problem $1$-sparse, since $\norm{\ell_2}_2^2 = 1$ for all $t$.
Importantly, minimizing regret in the contextual problem is equivalent to minimizing regret in the original multiclass setting, since our losses are such that $\ell_t(a) = -\mathbb{I}(a = y_t) = \mathbb{I}(a \neq y_t) - 1$ and the latter is the zero-one classification loss at step $t$ shifted by $1$ (note that shifting the losses of all actions by the same constant does not affect the regret).

\subsection{Algorithm for Sparse Contextual Bandits}

The reduction described above is designed so as to make the contextual online problem $1$-sparse, at the cost of arriving at a problem with negative losses.  In the literature on bandit problems, negative losses (or equivalently, nonnegative rewards) are notorious to be significantly more challenging technically compared to nonnegative losses \citep[see, e.g., the discussion in][]{kwon2016gains}.  In particular, the standard EXP4 algorithm for the contextual setting fails with negative losses since the exponential weights updates it employs become highly unstable in this case due to the loss-estimates that become negative and prohibitively large in absolute value.%
\footnote{One common fix for using EXP4 with nonnegative rewards, that already appears in the original paper of \cite{auer2002nonstochastic}, is to implement mixing with a uniform distribution that prevents the loss estimates from becoming too negative.  However, this mixing hinders the algorithm from leveraging sparsity in the losses, and the regret accumulated just due to the mixing is of order $\sqrt{KT}$; see also the discussion in \cite{bubeck2018sparsity}.}
Drawing inspiration from related (non-contextual) bandit problems, our approach to address this is through the use of an additional log-barrier regularization that promotes stability regardless of the sign of the losses \citep[e.g.,][]{wei2018more,bubeck2018sparsity,jin2020simultaneously,jin2021best}.

\begin{algorithm}[h]
\caption{Sparse Contextual Bandits}
\label{alg:alg}
\begin{algorithmic}[1] 
    \STATE {\bfseries Input:} policy class $\Pi = \brk[c]{\pi_1, \ldots, \pi_N}$, parameters $\eta, \nu, \eps$
    \STATE Initialize $p_{1,i} = 1/N$ for all $i \in [N]$
    \FOR{rounds $t = 1,2,3,\ldots, T$}
        \STATE Sample a policy $\pi_{i_t} \sim p_t$
        \STATE Receive context $x_t$ and choose action $\a_t = \pi_{i_t}(x_t)$
        \STATE For all $i \in [N]$ compute the importance-weighted loss estimate for policy $\pi_i$:
            \begin{align*}
            \hat{c}_{t,i} \eqdef \frac{\ell_{t}(\a_t) \mathbb{I} \brk[s]*{\pi_i(x_t) = \a_t}}{\sum_{j=1}^N p_{t,j} \mathbb{I} \brk[s]*{\pi_j(x_t) = \a_t}}
            \end{align*}
        \STATE Update $p_t$ via:
        \begin{align} \label{eqn:ftrl-update}
            p_{t+1}
            = 
            \argmin_{p \in \Delta^{\eps}_N} \brk[c]*{p \dotp \sum_{s=1}^{t} \hatc_s + R_{\eta,\nu}(p)}
        \end{align}
    \ENDFOR   
\end{algorithmic}
\end{algorithm}

Our algorithm for the Sparse Contextual Bandits problem is detailed in \cref{alg:alg}.
The algorithm performs Follow-the-Regularized-Leader (FTRL) updates over the $\eps$-shrunk $N$-dimensional probability simplex $\Delta^{\eps}_N$, defined by
\begin{align}
\label{eqn:shrunk-simplex}
    \Delta^\eps_N \eqdef \brk[c]*{p \in \Delta_N \mid p_i \geq \eps \quad \forall i \in [N]}.
\end{align}
for a parameter $0 \leq \eps < 1/K$, where $\Delta_N$ is the probability simplex in $\R^N$ containing probability vectors over the policy class $\Pi = \brk[c]{\pi_1,\ldots,\pi_N}$.
The FTRL regularizer $R_{\eta, \nu}(\cdot)$ in \cref{eqn:ftrl-update} is  parameterized by $\eta, \nu > 0$ is defined by 
\begin{align} \label{eq:regularization}
	R_{\eta,\nu}(p) = H_\eta(p) + \psi_\nu(p) ;
	\quad
	\text{where}
	\quad
	H_\eta(p) = \frac{1}{\eta} \sum_{i=1}^N p_i \log{p_i},
	\quad
	\psi_\nu(p) = - \frac{1}{\nu} \sum_{i=1}^N \log{p_i}
	.
\end{align}
Here $H_\eta(\cdot)$ is the \emph{negative entropy} regularization and $\psi_\nu(\cdot)$ is the \emph{log-barrier} regularization. 
We remark that allowing $\nu = \infty$ amounts to $R_{\eta,\nu}(p) = H_\eta(p)$, in which case the algorithm reduces to a version of the known EXP4 algorithm with an appropriate choice of $\eta$.

The main result of this section is given by the following theorem which provides a regret bound for \cref{alg:alg} in the general $s$-sparse contextual bandit setup, under an appropriate choice of parameters:

\begin{theorem}
\label{thm:upper-bound-sparse}
Let $\Pi \subseteq \brk[c]{\calX \to \calA}$ be a finite policy class of size $N$ where $\abs{\calA} = K$, and let $T \geq 1$.
Then for any $s$-sparse contextual bandit instance over $\Pi$, the expected regret of \cref{alg:alg} with $\eta = \sqrt{{\log(N)}/{sT}}$, $\nu = {1}/{16}$ and $\eps = {1}/{NT}$ is at most
\begin{align*}
    O \brk*{N \log \brk*{NT} + \sqrt{sT \log N}}
    .
\end{align*}
\end{theorem}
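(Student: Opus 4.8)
The plan is to run the standard Follow-the-Regularized-Leader (FTRL) analysis with local norms, but to let the two components of $R_{\eta,\nu}$ from \cref{eq:regularization} play distinct roles: the negative-entropy part $H_\eta$ will pay for the comparator and deliver the $\sqrt{sT\log N}$ rate, while the log-barrier $\psi_\nu$ will guarantee the multiplicative iterate stability needed to control the stability term despite the large, negative importance-weighted estimates $\hatc_t$. Throughout I write $q_t(a)=\sum_j p_{t,j}\,\mathbb{I}[\pi_j(x_t)=a]$ for the probability of playing action $a$ at round $t$ (so $\a_t\sim q_t$), and $L_t\in\R^N$ for the vector of policy losses $L_{t,i}=\ell_t(\pi_i(x_t))$; by construction $\hatc_t$ is conditionally unbiased for $L_t$ given the history.

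First I would reduce the regret to a fixed-comparator FTRL bound. For each index $j$ let $u^{(j)}\in\Delta^\eps_N$ be the $\eps$-smoothed vertex with $u^{(j)}_j=1-(N-1)\eps$ and $u^{(j)}_i=\eps$ for $i\ne j$. The pathwise FTRL inequality gives, for every fixed $j$,
\[
\sum_{t=1}^T\langle\hatc_t,\, p_t-u^{(j)}\rangle\le R_{\eta,\nu}(u^{(j)})-R_{\eta,\nu}(p_1)+\sum_{t=1}^T\langle\hatc_t,\, p_t-p_{t+1}\rangle.
\]
Taking expectations and using that $\hatc_t$ is conditionally unbiased for $L_t$, with $p_t$ history-measurable and $u^{(j)}$ deterministic, turns the left-hand side into $\E\sum_t\langle L_t,\,p_t-u^{(j)}\rangle$; since the resulting penalty is identical across $j$ by symmetry, it also controls $\sup_j$, and hence the regret against the (possibly data-dependent) best policy. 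The smoothing error $\sum_t\langle u^{(j)}-e_j,L_t\rangle$ is at most $2N\eps T\le 2$ since $\|L_t\|_\infty\le1$ and $\eps=1/(NT)$, so it is negligible.

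Next I would bound the two remaining pieces. For the penalty, the entropy part contributes $H_\eta(u^{(j)})-H_\eta(p_1)\le\frac1\eta\log N=\sqrt{sT\log N}$, and the log-barrier part contributes $\psi_\nu(u^{(j)})-\psi_\nu(p_1)\le\frac1\nu\bigl((N-1)\log\tfrac1\eps-N\log N\bigr)+O(1)=O(N\log(NT))$, using $\nu=\tfrac1{16}$ and $\eps=1/(NT)$. For the stability term I would invoke the usual second-order FTRL estimate $\langle\hatc_t,p_t-p_{t+1}\rangle\le\tfrac12\|\hatc_t\|^2_{(\nabla^2 R_{\eta,\nu}(z_t))^{-1}}$ for some $z_t$ on the segment $[p_t,p_{t+1}]$, and then \emph{discard} the log-barrier part of the Hessian, using $\nabla^2 R_{\eta,\nu}(p)\succeq\frac1\eta\diag(1/p_i)$, so that $\|\hatc_t\|^2_{(\nabla^2 R_{\eta,\nu}(z_t))^{-1}}\le\eta\sum_i z_{t,i}\,\hatc_{t,i}^2$. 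The crucial computation is
\[
\E\Bigl[\sum_i p_{t,i}\,\hatc_{t,i}^2\,\Big|\,\mathrm{hist}_t\Bigr]=\sum_a\frac{\ell_t(a)^2}{q_t(a)}\sum_i p_{t,i}\,\mathbb{I}[\pi_i(x_t)=a]=\sum_a\ell_t(a)^2\le\|\ell_t\|_2^2\le s,
\]
so that, once $z_t$ is replaced by $p_t$, the expected stability sums to $O(\eta sT)=O(\sqrt{sT\log N})$.

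The main obstacle is precisely the step that replaces the intermediate point $z_t$ by $p_t$, i.e.\ the multiplicative iterate stability $p_{t+1,i}\in[\tfrac12 p_{t,i},\,2p_{t,i}]$ (whence $z_{t,i}\le 2p_{t,i}$). This is where the log-barrier and the constant $\nu=\tfrac1{16}$ are needed: the relevant estimate satisfies $|\hatc_{t,i}|\,p_{t,i}=\frac{p_{t,i}}{q_t(\a_t)}|\ell_t(\a_t)|\le1$ (the ratio is at most $1$ since any $i$ with $\pi_i(x_t)=\a_t$ contributes $p_{t,i}$ to $q_t(\a_t)$, and $\hatc_{t,i}=0$ otherwise, using $\|\ell_t\|_\infty\le1$ as in the multiclass reduction), and the log-barrier's $\Theta\bigl(1/(\nu p_i^2)\bigr)$ curvature then confines the coordinate-wise change of the constrained FTRL iterate to within a factor of two. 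Establishing this two-sided bound rigorously---via the first-order optimality conditions of the FTRL updates at consecutive rounds---is the technically delicate part; the remainder is assembling the penalty, stability, and smoothing estimates into the claimed $O\bigl(N\log(NT)+\sqrt{sT\log N}\bigr)$ bound.
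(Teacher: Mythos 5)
Your proposal is correct and follows essentially the same route as the paper's proof: the $\eps$-smoothed comparator reduction, the FTRL bound with the entropy local norm after discarding the log-barrier Hessian, the log-barrier-induced multiplicative stability $z_{t,i}\le 2p_{t,i}$ with $\nu=1/16$, and the sparsity computation $\E[\sum_i p_{t,i}\hatc_{t,i}^2]\le s$ all match the paper's argument. The one step you leave as a sketch---the coordinate-wise stability of consecutive iterates---is exactly the paper's \cref{lem:iterate-stability}, and the ingredients you name (first-order optimality of the constrained FTRL update, the $1/(\nu p_i^2)$ curvature, and the bound $|\hatc_{t,i}|p_{t,i}\le 1$) are precisely those used in its proof; note only that just the upper bound $p_{t+1,i}\le 2p_{t,i}$ is needed (and proved there), not the two-sided containment you state.
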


Note that for the special case of bandit multiclass classification, setting $s = 1$ provides a regret bound of $\Otilde \brk{N + \sqrt{T}}$. 
In the regime where $N \ll \sqrt{KT}$, the bound given in \cref{thm:upper-bound-sparse} improves upon the guarantee of $O \brk{\sqrt{KT \log N}}$ given by the EXP4 algorithm, which does not take advantage of sparsity. 
Thus, we obtain the following immediate corollary which provides a regret upper bound for bandit multiclass classification:

\begin{corollary}
\label{thm:upper-bound-main}
Let $\calH \subseteq \brk[c]{\calX \to \calY}$ be a finite hypothesis class with $\abs{\calH} = N$ and $\abs{\calY} = K \leq N$, and let $T \geq 1$.
Then there exists an algorithm which, for any bandit multiclass classification instance over $\calH$, guarantees an expected regret bound of at most
\begin{align*}
    O \brk*{\min \brk[c]*{\sqrt{KT \log N},~ N \log \brk*{NT} + \sqrt{T \log N}}}.
\end{align*}
\end{corollary}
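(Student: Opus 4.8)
The plan is to derive \cref{thm:upper-bound-main} as an immediate consequence of \cref{thm:upper-bound-sparse} together with the classical guarantee for EXP4, by running whichever of the two algorithms yields the smaller bound. Since $N$, $K$ and $T$ are all known to the learner in advance, it suffices to exhibit two algorithms, one attaining each of the two terms inside the minimum, and then to select between them.

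For the first term, I would invoke the reduction described at the start of \cref{sec:upper}: treating examples as contexts and labels as actions, and assigning the shifted loss vector $\ell_t(a) = -\mathbb{I}[a = y_t]$, turns bandit multiclass over $\calH$ into a contextual bandit instance over a policy class $\Pi$ with $\abs{\Pi} = N$ and $\abs{\calA} = K$. This instance is $1$-sparse, since $\norm{\ell_t}_2^2 = 1$ for every $t$, and because shifting all coordinates of each loss vector by the same constant leaves the regret unchanged, the regret of \cref{alg:alg} on this instance coincides with the regret in the original multiclass problem. Applying \cref{thm:upper-bound-sparse} with $s = 1$ then gives the bound $O(N \log(NT) + \sqrt{T \log N})$.

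For the second term, I would instead keep the \emph{unshifted} zero-one losses $\mathbb{I}[a \neq y_t] \in \{0,1\}$, which form a standard contextual bandit instance with losses in $[0,1]$ over the same $K$ actions and policy class of size $N$. Running EXP4 \citep{auer2002nonstochastic} on this instance yields the classical regret bound $O(\sqrt{KT \log N})$. Taking the better of the two algorithms---which is possible since both guarantees are computable from the known quantities $N$, $K$, $T$---gives expected regret $O(\min\{\sqrt{KT \log N},\, N\log(NT) + \sqrt{T\log N}\})$, as claimed. The only points to verify are that the loss-shift in the reduction preserves regret (so that \cref{thm:upper-bound-sparse} transfers verbatim) and that EXP4 applies to the $\{0,1\}$-valued losses; both are routine, so beyond correctly assembling the two black-box bounds there is no substantial obstacle.
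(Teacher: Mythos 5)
Your proposal is correct and matches the paper's own derivation: the paper obtains this corollary immediately from \cref{thm:upper-bound-sparse} with $s=1$ via the stated reduction (shifted losses preserving regret), combined with the classical $O(\sqrt{KT\log N})$ guarantee of EXP4 on the unshifted zero-one losses, choosing whichever algorithm gives the smaller bound from the known parameters $N$, $K$, $T$.
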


\subsection{Regret analysis}

We conclude this section by sketching the proof of \cref{thm:upper-bound-sparse}.

\begin{proof}[Proof (sketch)]
It suffices to bound the regret of the algorithm compared to any fixed policy in the truncated simplex, provided that $\eps$ is chosen sufficiently small ($\eps=1/NT$ is a valid choice). 
With that in mind, fix $p^\star \in \Delta^{\eps}_N$. 
Using the fact that, at any round $t$, the importance-weighted loss estimators $\hatc_t$ are conditionally unbiased given the randomness in previous rounds (that is, $\E_t[\hatc_t] = c_t$, where $c_t \in [-1,0]^N$ is the loss vector induced over policies, with $c_{t,i} = \ell_t(\pi_i(x_t))$ for all $i$), together with a standard regret bound for FTRL (see \cref{lem:ftrl-first-order} in \cref{sec:upper-proofs}), we can obtain the following regret bound compared to $p^\star$:
\begin{align*}
    \sum_{t=1}^T \E \brk[s]*{c_t \cdot \brk*{p_t - p^\star}}
    &\leq
    \frac{N}{\nu} \log \frac{1}{\eps} + \frac{1}{\eta} \log N + \frac{\eta}{2}  \E \brk[s] *{\sum_{t=1}^T \sum_{i=1}^N z_{t,i} \hatc_{t,i}^2},
\end{align*}
where $z_t$ is a point on the line segment connecting $p_t$ and $p_{t+1}$ (crucially, it is a random variable that depends on the randomness at round $t$, and thus is \emph{not} conditionally independent of $\hatc_{t}$ given the history). Here, the first term comes from a bound over the magnitude of the log-barrier regularizer inside the truncated simplex $\Delta^{\eps}_N$, and the second term is an upper bound on the entropy component over $\Delta^N$.

The next step in the analysis aims at removing $z_t$ from the above bound using stability properties induced by the added log-barrier regularization.
In a nutshell, using the fact that the algorithm minimizes at step $t$ a convex objective regularized by the log-barrier $\psi_\nu$, we can show that the iterates it produces are stable in the sense that $\norm{p_{t+1} - p_t}^2_{\nabla^2 \psi_\nu(p_t)} \leq 1/\nu$,%
\footnote{The left-hand side is the ``local norm'' at $p_t$ with respect to $\psi_\nu$; here we use the notation $\norm{x}_A = \sqrt{x^T A x}$.}
for a suitable choice of the parameter $\nu$ (we show that setting $\nu=1/16$ is sufficient).
Since $\nabla^2 \psi_\nu(p) = \nu^{-1} \diag \brk{1/p_1^2,\ldots,1/p_N^2}$, this implies that
$$
    \sum_{i=1}^N \frac{\brk{p_{t+1,i} - p_{t,i}}^2}{\nu p_{t,i}^2}
    \leq
    \frac{1}{\nu}
    \qquad \implies \qquad
    \forall ~ i \in [N] : \;\; \brk*{\frac{p_{t+1,i}}{p_{t,i}} - 1}^2 \leq 1
    .
$$
Thus, we are guaranteed that $p_{t+1,i}/p_{t,i} \leq 2$, which is the content of \Cref{lem:iterate-stability} in \cref{sec:upper-proofs}.
This also implies that $z_{t,i} \leq 2 p_{t,i}$ for all $t$ and $i$, which can be used 
to eliminate $z_{t,i}$ from the regret bound above, resulting in
\begin{align*}
    \sum_{t=1}^T \E \brk[s]*{c_t \cdot \brk*{p_t - p^\star}}
    \leq
    \frac{N}{\nu} \log \frac{1}{\eps} + \frac{1}{\eta} \log N + \eta  \E \brk[s] *{\sum_{t=1}^T \sum_{i=1}^N p_{t,i} \hatc_{t,i}^2}.
\end{align*}

Finally, we bound the remaining second-order term in the right-hand side.
For every round $t$ and action $\a$ denote by $q_{t,\a} \eqdef \sum_{i=1}^N p_{t,i} \mathbb{I}[\pi_i(x_t)=\a_t]$ the probability that \cref{alg:alg} takes action $\a$ at round $t$. 
We then use the definition of the loss estimators together with the $s$-sparsity assumption, as follows: 
\begin{align*}
    \E \brk[s] *{\sum_{t=1}^T \sum_{i=1}^N p_{t,i} \hatc_{t,i}^2}
    &=
    \sum_{t=1}^T \sum_{i=1}^N  \E \brk[s]*{p_{t,i} \frac{\ell_{t,\a_t}^2 \mathbb{I}[\pi_i(x_t)=\a_t]}{q_{t,a_t}^2}} \\
    &=
    \sum_{t=1}^T \E \brk[s]*{\frac{\ell_{t,\a_t}^2}{q_{t,\a_t}^2} \sum_{i=1}^N p_{t,i} \mathbb{I} [\pi_i(x_t)=\a_t]} \\
    &=
    \sum_{t=1}^T \E \brk[s]*{\frac{\ell_{t,\a_t}^2}{q_{t,\a_t}}} \\
    &=
    \sum_{t=1}^T \E \brk[s]*{ \sum_{a : q_{t,a} > 0} q_{t,\a} \frac{\ell_{t,\a}^2}{q_{t,\a}}} \\
    &\leq
    \sum_{t=1}^T \E \brk[s]*{\norm{\ell_t}_2^2} 
    \leq sT.
\end{align*}
To summarize, we have the following regret bound compared to $p^\star$:
\begin{align*}
    \sum_{t=1}^T \E \brk[s]*{c_t \cdot \brk*{p_t - p^\star}}
    &\leq
    \frac{N}{\nu} \log \frac{1}{\eps} + \frac{1}{\eta} \log N + \eta sT,
\end{align*}
where $\nu$ is set to $1/16$.
Plugging in the values for $\eps, \nu, \eta$ given in the statement of the theorem, and noting that the left-hand side is equal to the expected regret of the algorithm up to $O(\eps NT)$, we conclude the proof.
\end{proof}

\section{Lower bound}
\label{sec:lower}

In this section we establish a regret lower bound for bandit multiclass classification which proves that upper bound given in \cref{thm:upper-bound-main} is tight, up to logarithmic factors. The lower bound is stated formally in the following theorem:

\begin{theorem}
\label{thm:bandit_multiclass_lb}
For any (possibly randomized) bandit multiclass online algorithm and for all integers $K,N,T \geq 1$, there exists a stochastic bandit multiclass instance with $K+1$ labels over a hypothesis class $\calH$ of size $N $, where the algorithm must incur an expected regret of at least
\begin{align*}
    \widetilde{\Omega} \brk!{ \min \brk[c]!{ N+\sqrt{T}, \sqrt{KT} } }.
\end{align*}
\end{theorem}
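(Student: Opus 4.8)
My plan is to reduce the claimed $\Omegatilde(\min\{N+\sqrt T,\sqrt{KT}\})$ bound to two separate lower bounds and combine them by a worst-instance argument. Writing $a=N+\sqrt T$ and $b=\sqrt{KT}$, one checks the elementary inequality $\max\{\sqrt T,\ \min\{N,b\}\}\ge \tfrac12\min\{a,b\}$ (splitting into the cases $b\le a$ and $a<b$). Hence it suffices to produce, for every algorithm, (i) a stochastic instance forcing regret $\Omega(\sqrt T)$, and (ii) a stochastic instance forcing regret $\Omega(\min\{N,\sqrt{KT}\})$; handing the algorithm whichever of the two is worse for it yields the theorem up to a factor of two. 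Throughout I invoke Yao's principle: I fix a prior over a family of instances and lower bound the expected regret of an arbitrary deterministic algorithm, which transfers to randomized ones.

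For bound (i) I use a noisy two-label instance, which is where the unavoidable $\sqrt T$ of even the full-information problem comes from. Fix a single context and two hypotheses, let the label take one of two values drawn i.i.d.\ with a small bias $\pm\epsilon$ toward the label of a hidden $h^\star$ chosen uniformly among the two, and apply a standard change-of-measure argument (bounding the $\KL$ between the two instances by $O(\epsilon^2 T)$ and optimizing $\epsilon=\Theta(1/\sqrt T)$) to conclude $\Omega(\sqrt T)$ regret. Since two labels make bandit feedback equivalent to full feedback, no structural subtlety arises here.

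The heart of the argument is bound (ii). When $N\le K$ it is clean: take a single context and $N$ hypotheses assigning it $N$ distinct labels, and plant a uniformly random correct label matching a hidden hypothesis (a realizable instance). With only correct/incorrect feedback the learner performs an unstructured search over $N$ candidate labels, so in expectation it makes $\Omega(\min\{N,T\})$ mistakes while the planted hypothesis is always correct. To push the bound up to its saturation value $\sqrt{KT}$ when $N>K$, I would use a ``list-star'' construction over $K+1$ labels: pick $s\approx\min\{N/K,\ \sqrt{T/K}\}$ star contexts $x_1,\dots,x_s$, a pivot $h_0$, and for each $x_i$ a block of $K$ hypotheses agreeing with $h_0$ off $x_i$ and taking the $K$ distinct non-pivot labels on $x_i$ (so $\abs{\calH}=1+sK\le N$ and $sK=\min\{N,\sqrt{KT}\}$). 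Presenting the contexts i.i.d.\ uniformly, each appears $\approx T/s\gtrsim K$ times, enough to force a full $K$-way identification on every context; planting an independent hidden ``needle'' per context and reducing via the chain rule to a product of $s$ single-context identification problems would then target total regret $\Omega(sK)=\Omega(\min\{N,\sqrt{KT}\})$.

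The step I expect to be the main obstacle is guaranteeing, in this multi-context construction, that the \emph{best hypothesis in the class} stays competitive, so that the regret is genuinely $\Omega(sK)$ rather than an artifact of a weak comparator. The danger is twofold: because each list-star hypothesis deviates from the pivot on only one context, a learner can simply play the pivot policy and already be within $o(sK)$ of the best hypothesis, thereby dodging the per-context exploration the bound relies on; dually, if one injects enough ``reward'' per context to make exploration worthwhile, a learner that explores and then exploits may beat every individual hypothesis, driving the regret against the class toward zero. Resolving this tension is exactly where the single-label ($K+1$-label) structure and the horizon regime must be used: the instance has to be tuned (per-context gaps, noise, and the number of appearances $T/s$) so that each star context is an \emph{independent} hard sub-problem whose optimal action is realized within $\calH$, no cheap pivot-like policy competes, and the learner has neither the information nor the horizon to aggregate across contexts. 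Making the per-context identification bounds tensorize into a clean $\Omega(sK)$ — equivalently, verifying that the construction saturates at $\sqrt{KT}$ precisely when $N\ge\sqrt{KT}$ — is the delicate and, I expect, most technical part of the proof.
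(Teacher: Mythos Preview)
Your diagnosis of the obstacle is exactly right, but the proposed fix (``tune per-context gaps, noise, and $T/s$'') cannot work, and this is where the proposal has a genuine gap. With an independent needle on each of the $s$ contexts, any hypothesis in your class knows at most one needle, so the best hypothesis makes roughly $T-T/s$ mistakes. A learner that spends $O(K)$ rounds exploring on each context identifies every needle after $O(sK)$ mistakes and is thereafter perfect; since $sK=\Theta(\sqrt{KT})\ll T$ in the regime of interest, the learner's total loss is far below that of the best hypothesis and the regret is negative. Shrinking the per-context gap to make identification statistically hard does not help: it shrinks the best hypothesis's advantage over the pivot by the same factor, so the target $\Omega(sK)$ scales down with it. The tensorization you hope for is a lower bound on the learner's \emph{loss}, not on its \emph{regret against a class that cannot itself aggregate}.

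The paper's construction sidesteps this by planting a needle on \emph{exactly one} uniformly random context $x^\star$ (with hidden label $y^\star$), and making the default label $0$ the optimal action on every other context. Then $h_{x^\star,y^\star}$ is the genuinely best policy (no learner can beat it), and the learner faces a dilemma: either it plays the default most of the time and cannot locate $(x^\star,y^\star)$, incurring $\Omega(T/C)$ regret from the missed reward on $x^\star$; or it explores non-default labels often, paying $\Omega(1)$ per exploration against a best hypothesis that plays default off $x^\star$. The two cases are balanced at $C=\Theta(\sqrt{T/K})$ to give $\Omega(\sqrt{KT})$. One further subtlety your sketch does not anticipate: to make the label distribution a valid single-label distribution on contexts $x\neq x^\star$, the paper gives every non-default label a small probability $\Theta(1/K)$ there, which means probing a wrong label on $x^\star$ versus $x\neq x^\star$ leaks a little information (the probabilities differ: $\Theta(1/K^2)$ versus $\Theta(1/K)$). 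The key technical lemma bounds this leakage via a $\KL$ computation showing each such probe contributes only $O((\log K)/K)$ to the divergence, which is what keeps the argument from losing a factor of $K$.
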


We note that the traditional $\Omega(\sqrt{KT})$ lower bounds for $K$-armed bandits \citep{auer2002nonstochastic, Tor-Csaba-book,Slivkins-book}, that only needs a single example/context and $N=K$ hypotheses, do not translate to our setting due to our ``sparsity'' constraint; namely, the sum of rewards of all labels for a given instance must be one (rather than $\Theta(K)$ as in the standard lower bound constructions).
Alternatively, it is straightforward to meet the sparsity constraint by allowing $N$ to be exponential in $K$, in which case the minimax rates become trivial (in this case $\log{N} = \Theta(K)$).  
The challenge in proving the lower bound is in striking a balance between these two extremes, taking advantage of multiple examples without exploding the number of hypotheses.

Let us give here a sketch of the proof, deferring the formal details to \cref{sec:lower-proofs}.

\begin{proof}[Proof of \cref{thm:bandit_multiclass_lb} (sketch)]
Consider a bandit multiclass problem where there are a finite number of $C$ possible examples and $K+1$ labels. The distribution over examples is uniform, and the conditional distribution of the label given an example is designed as follows.
One of the labels, $y=0$, is a ``default label'' whose conditional probability is $1/3$, regardless of the example $x$.
In addition, there is one ``hidden'' example-label pair, $(x^*,y^*)$, such that the conditional probability of the label $y^*$ given the example $x^*$ is $2/3$.  All other example-label pairs have zero probability of appearing.
The hypothesis class is the set of functions that label all instances with the default label, except for one example that may be labeled with any label $y \neq 0$; there are precisely $N=CK$ such functions.  Notice that the optimal hypothesis (which is in this class) is the one that labels $x^*$ with the label $y^*$, and any other example $x \neq x^*$ with the default label $y=0$.

For the learning algorithm that tries to compete with the optimal hypothesis, there are intuitively two different strategies to choose from.
The first is to opt out of identifying the ``hidden'' pair $(x^*,y^*)$ and simply choose the default label $y=0$ for all examples, receiving a reward of $1/3$ in expectation.  This strategy incurs regret compared to optimal whenever the example $x^*$ appears (with prob.~$1/C$), in which case it receives expected reward $1/3$ whereas optimal receives expected reward $2/3$; the overall regret here is therefore $\Omega(T/C)$ in expectation.
The second strategy is to explore and try to identify $(x^*,y^*)$.  Once this hidden pair is found the algorithm stops incurring regret, however the number of exploration rounds required for doing so is at least $\Omega(CK)$, and on each of them the algorithm suffers constant regret (since its reward is zero while optimal receives $1/3$); the overall regret for this strategy is therefore $\Omega(CK)$ in expectation.

Balancing these two extreme choices for the algorithm, we set $C = \Theta(\sqrt{T/K})$ in which case the algorithm suffers $\Omega(\sqrt{KT})$ regret either way.  Note that crucially, the hypothesis class in this construction is therefore of size $N = CK = \Theta(\sqrt{KT})$, and in particular, polynomial in $K$ and $T$.  In the regime where $N$ is smaller than $\Theta(\sqrt{KT})$, we can adjust the parameters such that the lower bound becomes $\widetilde\Omega(N + \sqrt{T})$.

There is one issue with the construction above we neglected thus far: the conditional label distributions for examples $x \neq x^*$ do not sum to one, or in other words, it allowed for the case that some (in fact, most) examples are not labeled with any of the labels, whereas in the multiclass setting there should always be a single correct label for every example.
Our formal construction and proof address this issue by allowing a small probability for all labels $y \neq 0$ given an example $x \neq x^*$, such that the sum of these probabilities is $2/3$ which therefore makes the conditional label distribution sum up to one.  The analysis becomes more challenging due to this modification, since there is now a small ``information leakage'' about $(x^*,y^*)$ whenever the algorithm sees an instance $x \neq x^*$ and chooses any label $y \neq 0$.  Our formal argument in this case shows that this leakage is not too harmful.

Another, more minor nuisance is that the lower bound we described does not directly apply to the natural case where the true labels are determined deterministically given the examples (our construction forms a case where there is a distribution over labels for each possible example).  Nevertheless, we can easily adapt our construction to use a deterministic mapping from example to label by duplicating each example many times and setting the single label of each such copy according to the label probabilities before duplication (while keeping the hypothesis class agnostic to the duplication, and thus still of size $CK$).
\end{proof}

\section*{Acknowledgments}

This project has received funding from the European Research Council (ERC) under the European Union’s Horizon 2020 research and innovation program (grant agreements No. 101078075; 882396). Views and opinions expressed are however those of the author(s) only and do not necessarily reflect those of the European Union or the European Research Council. Neither the European Union nor the granting authority can be held responsible for them.
This project has also received funding from 
the Israel Science Foundation (ISF, grant numbers 2549/19;  2250/22), the Yandex Initiative for Machine Learning at Tel Aviv University, the Tel Aviv University Center for AI and Data Science (TAD), the Len Blavatnik and the Blavatnik Family foundation, 
and from the Adelis Foundation.

Shay Moran is a Robert J.\ Shillman Fellow; supported by ISF grant 1225/20, by BSF grant 2018385, by an Azrieli Faculty Fellowship, by Israel PBC-VATAT, by the Technion Center for Machine Learning and Intelligent Systems (MLIS), and by the European Union (ERC, GENERALIZATION, 101039692). Views and opinions expressed are however those of the author(s) only and do not necessarily reflect those of the European Union or the European Research Council Executive Agency. Neither the European Union nor the granting authority can be held responsible for them.

\bibliographystyle{abbrvnat}
\bibliography{bibliography}

\appendix
\crefalias{section}{appendix}

\newpage
\section[Proof of upper bound]{Proof of \cref{sec:upper}}
\label{sec:upper-proofs}

\cref{thm:upper-bound-sparse} mainly follows from following lemma which provides a regret bound for FTRL with log-barrier regularization:

\begin{lemma}
\label{lem:ftrl-first-order}
Suppose we run \cref{alg:alg} on arbitrary loss vectors $g_t \in \R^N$ where the parameters $\nu, \eta$ satisfy $0 < \nu \leq 1$ and $\eta > 0$. Then, for all $\eps>0$ and all $p^\star \in \Delta_N^\eps$,
\begin{align*}
	\sum_{t=1}^T \grad_t \dotp (p_t - p^\star)
	\leq
	\frac{N}{\nu} \log\frac{1}{\varepsilon}
	+ \frac{1}{\eta} \log{N}
	+ \frac{\eta}{2} \sum_{t=1}^T \sum_{i=1}^N z_{t,i} \grad_{t,i}^2,
\end{align*}
where $z_t \in \brk[s]*{p_t, p_{t+1}}$ is some point on the line segment connecting $p_t$ and $p_{t+1}$.
\end{lemma}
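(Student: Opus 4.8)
The plan is to derive the lemma as a specialization of the standard Follow-the-Regularized-Leader regret bound, worked out through the convex-conjugate (potential) view of \cref{alg:alg}, which is what cleanly produces both the sharp constant $\tfrac12$ and an intermediate evaluation point for the second-order term. Writing $L_{t-1}=\sum_{s=1}^{t-1}\grad_s$ for the cumulative loss and defining the dual potential
\[ \Phi(\ell) \eqdef \min_{p \in \Delta_N^\eps} \brk[c]*{ R_{\eta,\nu}(p) + \ell \cdot p }, \]
the FTRL iterate is exactly $p_t=\nabla\Phi(L_{t-1})$ by Danskin's theorem, since $R_{\eta,\nu}$ is strictly convex and the minimizer over $\Delta_N^\eps$ is unique. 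The function $\Phi$ is concave (an infimum of affine functions of $\ell$), with $\nabla^2\Phi=-(\nabla^2 R_{\eta,\nu})^{-1}$, so a second-order Taylor expansion of $\Phi$ along the segment $[L_{t-1},L_t]$ yields
\[ \grad_t \cdot p_t = \langle \nabla\Phi(L_{t-1}),\grad_t\rangle = \Phi(L_t) - \Phi(L_{t-1}) + \tfrac12 \norm{\grad_t}^2_{(\nabla^2 R_{\eta,\nu}(z_t))^{-1}}, \]
where $z_t$ is the minimizer attached to an intermediate cumulative loss and hence lies between $p_t$ and $p_{t+1}$. Summing over $t$ telescopes the potential differences to $\Phi(L_T)-\Phi(0)$; bounding $\Phi(L_T)\le R_{\eta,\nu}(p^\star)+L_T\cdot p^\star$ (plug $p=p^\star$ into the minimum) and using $\Phi(0)=\min_p R_{\eta,\nu}(p)=R_{\eta,\nu}(p_1)$ gives the decomposition
\[ \sum_{t=1}^T \grad_t \cdot (p_t - p^\star) \le \bigl( R_{\eta,\nu}(p^\star) - R_{\eta,\nu}(p_1) \bigr) + \tfrac12 \sum_{t=1}^T \norm{\grad_t}^2_{(\nabla^2 R_{\eta,\nu}(z_t))^{-1}}. \]

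Next I would bound the two terms. Since the uniform distribution minimizes both the negative entropy and the log-barrier over $\Delta_N$ and is feasible (as $\eps\le 1/N$), we have $p_1=(1/N,\ldots,1/N)$. For the penalty, the negative entropy is nonpositive and equals $-\tfrac1\eta\log N$ at the uniform point, so $H_\eta(p^\star)-H_\eta(p_1)\le\tfrac1\eta\log N$; for the log-barrier, $\psi_\nu(p^\star)-\psi_\nu(p_1)=\tfrac1\nu\sum_i\log\tfrac{1/N}{p^\star_i}\le\tfrac N\nu\log\tfrac1{N\eps}\le\tfrac N\nu\log\tfrac1\eps$, using $p^\star_i\ge\eps$. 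Together these produce exactly the first two terms $\tfrac N\nu\log\tfrac1\eps+\tfrac1\eta\log N$.

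For the second-order term the key observation is that $\nabla^2 R_{\eta,\nu}(p)=\tfrac1\eta\diag\brk*{1/p_1,\ldots,1/p_N}+\tfrac1\nu\diag\brk*{1/p_1^2,\ldots,1/p_N^2}$ is diagonal and is a sum of two positive semidefinite matrices. Discarding the log-barrier contribution only enlarges the inverse in the Loewner order, so $(\nabla^2 R_{\eta,\nu}(z_t))^{-1}\preceq \eta\,\diag\brk*{z_{t,1},\ldots,z_{t,N}}$, whence $\tfrac12\norm{\grad_t}^2_{(\nabla^2 R_{\eta,\nu}(z_t))^{-1}}\le\tfrac\eta2\sum_i z_{t,i}\grad_{t,i}^2$. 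Summing over $t$ and combining with the penalty bound gives precisely the stated inequality.

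The main obstacle is the stability/second-order step: securing the sharp constant $\tfrac12$ \emph{and} the guarantee that the Hessian is evaluated at a point $z_t$ between $p_t$ and $p_{t+1}$ — the latter being exactly what makes the bound usable downstream through the iterate-stability estimate $z_{t,i}\le 2p_{t,i}$ used in the proof of \cref{thm:upper-bound-sparse}. Additional care is needed because the domain is the truncated simplex $\Delta_N^\eps$: the Danskin differentiation and the identity $\nabla^2\Phi=-(\nabla^2 R_{\eta,\nu})^{-1}$ must be read relative to the equality constraint $\sum_i p_i=1$ and any active box constraints $p_i=\eps$, which restrict the relevant quadratic form to a subspace and can only shrink it, so the upper bound is preserved. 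Making this boundary bookkeeping fully rigorous is the delicate part; the penalty estimate and the diagonal-Hessian comparison are routine.
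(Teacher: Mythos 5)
Your overall strategy---passing to the dual potential $\Phi(\ell)=\min_{p\in\Delta_N^\eps}\{R_{\eta,\nu}(p)+\ell\cdot p\}$, telescoping $\Phi(L_t)-\Phi(L_{t-1})$, and comparing $(\nabla^2 R_{\eta,\nu})^{-1}\preceq(\nabla^2 H_\eta)^{-1}=\eta\,\diag(z_i)$---is a legitimate alternative to the paper's argument, and your bounds on the penalty terms and the diagonal-Hessian comparison are correct and match the paper's. The paper instead Taylor-expands the \emph{primal} regularizer $R$ along the straight segment from $p_t$ to $p_{t+1}$: it telescopes $\sum_t g_t\cdot p_{t+1}$ via the potentials $\Phi_t(p)=p\cdot\sum_{s<t}g_s+R(p)$, uses first-order optimality of $p_t$ to extract $-D_R(p_{t+1},p_t)$, writes $D_R(p_{t+1},p_t)=\tfrac12\norm{p_{t+1}-p_t}^2_{\nabla^2 R(z_t)}$ by the mean-value form of the Taylor remainder for the scalar function $\theta\mapsto R(p_t+\theta(p_{t+1}-p_t))$ (smooth, since all iterates lie in $\Delta_N^\eps$ with $\eps>0$), and finishes with H\"older plus AM--GM. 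This route manifestly produces a $z_t$ on the segment $[p_t,p_{t+1}]$.

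The genuine gap in your version is precisely the claim that your $z_t$ ``lies between $p_t$ and $p_{t+1}$.'' In your expansion the Hessian of $\Phi$ is evaluated at an intermediate \emph{cumulative loss} $\xi\in[L_{t-1},L_t]$, so the relevant primal point is $\nabla\Phi(\xi)$, which lies on the (generally curved) path $\theta\mapsto\nabla\Phi(L_{t-1}+\theta g_t)$ traced by the argmin map---not on the chord joining its endpoints $p_t$ and $p_{t+1}$. The word ``hence'' does not follow, and the lemma's conclusion $z_t\in[p_t,p_{t+1}]$ is exactly what the downstream stability argument ($z_{t,i}\le 2p_{t,i}$ via \cref{lem:iterate-stability}) consumes, so this cannot be glossed over. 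A second, related problem is that the single-intermediate-point Taylor remainder for $\Phi$ requires $\Phi$ to be twice differentiable along all of $[L_{t-1},L_t]$; on the truncated simplex the active set of the constraints $p_i\ge\eps$ can change as $\ell$ moves, at which points $\nabla\Phi$ is continuous but $\nabla^2\Phi$ jumps, so at best you get an integral-form remainder averaging over the path rather than a single $z_t$. Both issues disappear if you follow the paper and expand $R$ (not $\Phi$) along the actual primal segment; alternatively, your route can be salvaged, but only by separately proving a stability bound of the form $(\nabla\Phi(\xi))_i\le 2p_{t,i}$ for every $\xi\in[L_{t-1},L_t]$ (re-running the argument of \cref{lem:iterate-stability} with the partial increment $\theta\hat c_t$), which is extra work the statement of the lemma is designed to avoid.
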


We also rely on the following lemma, which provides a multiplicative stability property of the FTRL iterates which follows from the log-barrier regularization. For a proof, see \cref{sec:general-regret-bound}.

\begin{lemma}
\label{lem:iterate-stability}
If $\nu \leq \frac{1}{16}$, then for every round $t$ and every $i \in [N]$ it holds that $p_{t+1,i} \leq \frac{1}{8 \nu} p_{t,i}$.
\end{lemma}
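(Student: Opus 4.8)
The plan is to prove the multiplicative stability from a single comparison inequality between the two consecutive FTRL iterates, fed by the special structure of the importance-weighted estimators $\hatc_t$. I would avoid the self-concordance ``bootstrap'' implicit in the local-norm bound $\norm{p_{t+1}-p_t}^2_{\nabla^2\psi_\nu(p_t)}\le 1/\nu$ from the proof sketch, and instead work directly with the \emph{symmetrized} Bregman divergence of $R_{\eta,\nu}$, which delivers the one-sided bound we need with no circular dependence on $\norm{p_{t+1}-p_t}$. Concretely, write $\Phi_t(p)=p\cdot\sum_{s<t}\hatc_s+R_{\eta,\nu}(p)$ and $\Phi_{t+1}(p)=\Phi_t(p)+p\cdot\hatc_t$, so $p_t=\argmin_{\Delta_N^\eps}\Phi_t$ and $p_{t+1}=\argmin_{\Delta_N^\eps}\Phi_{t+1}$. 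Both objectives are convex and both minimizers satisfy $p_i\ge\eps>0$, where $R_{\eta,\nu}$ is differentiable; hence the first-order (variational) optimality inequalities hold over $\Delta_N^\eps$, namely $\langle\nabla\Phi_{t+1}(p_{t+1}),p_t-p_{t+1}\rangle\ge 0$ and $\langle\nabla\Phi_t(p_t),p_{t+1}-p_t\rangle\ge 0$. Adding these, and using $\nabla\Phi_{t+1}-\nabla\Phi_t=\hatc_t$ together with $\nabla\Phi_t(p_{t+1})-\nabla\Phi_t(p_t)=\nabla R_{\eta,\nu}(p_{t+1})-\nabla R_{\eta,\nu}(p_t)$ and the identity $\langle\nabla f(y)-\nabla f(x),y-x\rangle=D_f(y,x)+D_f(x,y)$, I obtain the comparison inequality
\[
D_{R}(p_{t+1},p_t)+D_{R}(p_t,p_{t+1})\le \hatc_t\cdot(p_t-p_{t+1}).
\]

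Next I lower-bound the left-hand side. Since the negative-entropy component contributes a nonnegative Bregman divergence, I drop it and keep only the log-barrier $\psi_\nu$. Writing $r_i\eqdef p_{t+1,i}/p_{t,i}$, a direct computation of the symmetrized divergence of $\psi_\nu$ gives
\[
D_{\psi_\nu}(p_{t+1},p_t)+D_{\psi_\nu}(p_t,p_{t+1})=\frac1\nu\sum_{i}\Big(r_i+\frac1{r_i}-2\Big)=\frac1\nu\sum_{i}\frac{(r_i-1)^2}{r_i}\ge\frac1\nu\frac{(r-1)^2}{r},
\]
where $r\eqdef\max_i r_i$ and the last step retains only the maximizing coordinate (all summands being nonnegative).

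The crux is upper-bounding the right-hand side via the estimator structure. The key observation is that $\hatc_{t,i}$ is supported on $S\eqdef\brk[c]{i:\pi_i(x_t)=\a_t}$, is nonpositive (the reduced losses lie in $[-1,0]$), and on $S$ equals the $i$-independent value $\ell_t(\a_t)/q_{t,\a_t}$. Consequently the $p_t$-weighted $\ell_1$-mass telescopes against the sampling probability: $\sum_i p_{t,i}\abs{\hatc_{t,i}}=\abs{\ell_t(\a_t)}\,q_{t,\a_t}^{-1}\sum_{i\in S}p_{t,i}=\abs{\ell_t(\a_t)}\le 1$, using $\sum_{i\in S}p_{t,i}=q_{t,\a_t}$. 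Since $\hatc_t\cdot(p_t-p_{t+1})=\sum_i p_{t,i}\abs{\hatc_{t,i}}(r_i-1)$, discarding the (negative) coordinates with $r_i<1$ bounds it by $(r-1)_+\sum_i p_{t,i}\abs{\hatc_{t,i}}\le (r-1)_+$. Combining the three steps yields $\tfrac1\nu\tfrac{(r-1)^2}{r}\le (r-1)_+$; if $r\le 1$ there is nothing to prove, and otherwise dividing by $r-1>0$ gives $(r-1)/r\le\nu$, i.e. $r\le 1/(1-\nu)$. As $\nu\le\tfrac1{16}\le\tfrac19$ implies $1/(1-\nu)\le 1/(8\nu)$, we conclude $p_{t+1,i}\le\tfrac1{8\nu}p_{t,i}$ for every $i$.

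I expect the main obstacle to be the third step: pinning down why importance weighting does not destabilize the update even though a single entry of $\hatc_t$ can be as large as $1/q_{t,\a_t}\ge 1/p_{t,i}$. The resolution is precisely that the \emph{aggregate} $p_t$-weighted mass of $\hatc_t$ cancels the $1/q_{t,\a_t}$ blow-up and is bounded by $\abs{\ell_t(\a_t)}\le 1$ uniformly, independently of the sparsity level $s$; this is what lets a single fixed $\nu=\tfrac1{16}$ suffice. (The very same cancellation yields the dual-local-norm estimate $\nu\sum_i p_{t,i}^2\hatc_{t,i}^2\le\nu\,\ell_t(\a_t)^2\le\nu$, which is the route matching the sketch's local-norm claim; I prefer the Bregman route above as it needs no self-concordance transfer between $p_t$ and an intermediate point $z_t$.) A secondary point to check is that the variational inequalities of the first step remain valid when a constraint $p_i\ge\eps$ is active, which holds because they are asserted over the full feasible set $\Delta_N^\eps$ rather than requiring unconstrained stationarity.
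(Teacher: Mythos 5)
Your proof is correct, and it takes a genuinely different route from the paper's. The paper argues by localization: with $\alpha = 1/(8\nu)$, it shows that $F_{t+1}(p') \geq F_{t+1}(p_t)$ for every $p'$ on the boundary of the convex set $\{p \in \Delta_N^\eps : \|p - p_t\|^2_{\nabla^2\psi_\nu(p_t)} \leq (\alpha-1)^2/\nu\}$, so the minimizer $p_{t+1}$ must lie inside it; this requires a Taylor expansion of $F_{t+1}$ about $p_t$ with an intermediate point $\tilde p$, a transfer of the log-barrier Hessian from $\tilde p$ back to $p_t$ (using that points of the set satisfy $p'_i \leq \alpha p_{t,i}$), and the estimate $\hatc_t \cdot (p' - p_t) \geq -\alpha$. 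Your symmetrized-Bregman (gradient monotonicity) inequality between the two consecutive constrained minimizers reaches the same conclusion with no intermediate point and no radius fixed in advance, and it actually yields the sharper ratio $r \leq 1/(1-\nu) \leq 16/15$, well below the $1/(8\nu) \geq 2$ the statement asks for. Both arguments ultimately rest on the same two structural facts — nonpositivity of the reduced losses and the cancellation $\sum_i p_{t,i}\,|\hatc_{t,i}| = |\ell_t(a_t)|\,q_{t,a_t}^{-1}\sum_{i \in S} p_{t,i} = |\ell_t(a_t)| \leq 1$ — which is exactly what neutralizes the $1/q_{t,a_t}$ blow-up of individual entries and lets a fixed $\nu$ suffice. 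The remaining steps all check out: the variational inequalities are valid over $\Delta_N^\eps$, where $R_{\eta,\nu}$ is differentiable since every feasible coordinate is at least $\eps > 0$; the entropy Bregman terms you drop are nonnegative; retaining only the maximizing coordinate in $\tfrac{1}{\nu}\sum_i (r_i-1)^2/r_i$ is legitimate because every summand is nonnegative (even if that coordinate lies outside the support $S$ of $\hatc_t$); and the base case $t=1$ is covered since $p_1$ is the minimizer of $R_{\eta,\nu}$ over $\Delta_N^\eps$ by symmetry. Note that your bound is one-sided (it controls only $\max_i p_{t+1,i}/p_{t,i}$, not the minimum), but that is all the lemma claims and all the main theorem uses.
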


We now have what we need in order to complete the proof of \cref{thm:upper-bound-sparse}.

\begin{proof}[Proof of \cref{thm:upper-bound-sparse}]
We prove a regret bound of $O(N \log N + \sqrt{S \log N})$ against any mixture of policies $p^\star \in \Delta^{\eps}_N$. This will imply a regret bound of the same magnitude against the best fixed policy for the following reason: 
Denote by $\pi^\star = \pi_{i^\star} = \argmax_{\pi \in \Pi} \sum_{t=1}^T \E \brk[s]*{r \brk*{\pi(x_t) \mid x_t}}$ the benchmark policy. The regret of $\alg$ is then given by
\begin{align*}
    \regret_T \brk*{\Pi} = \sum_{t=1}^T \E \brk[s]*{c_t \cdot \brk*{p_t - \mathbf{e}_{i^\star}}},
\end{align*}
where $c_{t,i} = \ell_{t, \pi_i (x_t)} \in [-1,0]$ are the policy cost vectors, and $\mathbf{e}_i$ denotes the $i$'th standard basis vector in $\R^N$. Define $p^\star \in \Delta_N$ by
\begin{align*}
    p^\star = \eps \sum_{i \neq i^\star} \mathbf{e}_i + \brk*{1 - (N-1) \eps} \mathbf{e}_{i^\star}.
\end{align*}
It is straightforward to see that $p^* \in \Delta^{\eps}_N$, and using H\"older's inequality it holds that
\begin{align*}
    \regret_T \brk*{\Pi}
    &=
    \sum_{t=1}^T \E \brk[s]*{c_t \cdot \brk*{p_t - p^\star}}
    +
    \sum_{t=1}^T \E \brk[s]*{c_t \cdot \brk*{p^\star - \mathbf{e}_{i^\star}}} \\
    &\leq
    \sum_{t=1}^T \E \brk[s]*{c_t \cdot \brk*{p_t - p^\star}} + T \cdot \norm{p^\star - \mathbf{e}_{i^\star}}_1 \\
    &=
    \sum_{t=1}^T \E \brk[s]*{c_t \cdot \brk*{p_t - p^\star}} + 2 \eps (N-1) T \\
    &\leq
    \sum_{t=1}^T \E \brk[s]*{c_t \cdot \brk*{p_t - p^\star}} + 2.
\end{align*}
Thus it suffices to bound the regret of \cref{alg:alg} compared to any fixed $p^\star \in \Delta^{\eps}_N$. With that in mind, fix $p^\star \in \Delta^{\eps}_N$. Using the fact that the importance-weighted loss estimators are conditionally unbiased together with \cref{lem:ftrl-first-order}, we obtain the following regret bound compared to $p^\star$:
\begin{align*}
    \sum_{t=1}^T \E \brk[s]*{c_t \cdot \brk*{p_t - p^\star}}
    &=
    \sum_{t=1}^T \E \brk[s]*{\E_t[\hatc_t] \cdot \brk*{p_t - p^\star}} \\
    &=
    \sum_{t=1}^T \E \brk[s]*{\hatc_t \cdot \brk*{p_t - p^\star}} \\
    &\leq
    \frac{N}{\nu} \log \frac{1}{\eps} + \frac{1}{\eta} \log N + \frac{\eta}{2}  \E \brk[s] *{\sum_{t=1}^T \sum_{i=1}^N z_{t,i} \hatc_{t,i}^2},
\end{align*}
where $z_t$ is a point on the line segment connecting $p_t$ and $p_{t+1}$. Using \cref{lem:iterate-stability}, for every $i\in [N]$ it holds that $z_{t,i} \leq 2 p_{t,i}$, and we thus obtain
\begin{align*}
    \sum_{t=1}^T \E \brk[s]*{c_t \cdot \brk*{p_t - p^\star}}
    \leq
    \frac{N}{\nu} \log \frac{1}{\eps} + \frac{1}{\eta} \log N + \eta  \E \brk[s] *{\sum_{t=1}^T \sum_{i=1}^N p_{t,i} \hatc_{t,i}^2}.
\end{align*}

Next, for every round $t$ and action $\a$ denote by $q_{t,\a} \eqdef \sum_{i=1}^N p_{t,i} \mathbb{I}[\pi_i(x_t)=\a_t]$ the probability that \cref{alg:alg} performs action $\a$ in round $t$. To bound the last term in the above equation, we use the definition of the loss estimators together with the $\ell_2$-sparseness assumption as follows: 
\begin{align*}
    \E \brk[s] *{\sum_{t=1}^T \sum_{i=1}^N p_{t,i} \hatc_{t,i}^2}
    &=
    \sum_{t=1}^T \sum_{i=1}^N  \E \brk[s]*{p_{t,i} \frac{\ell_{t,\a_t}^2 \mathbb{I}[\pi_i(x_t)=\a_t]}{q_{t,a_t}^2}} \\
    &=
    \sum_{t=1}^T \E \brk[s]*{\frac{\ell_{t,\a_t}^2}{q_{t,\a_t}^2} \sum_{i=1}^N p_{t,i} \mathbb{I} [\pi_i(x_t)=\a_t]} \\
    &=
    \sum_{t=1}^T \E \brk[s]*{\frac{\ell_{t,\a_t}^2}{q_{t,\a_t}}} \\
    &=
    \sum_{t=1}^T \E \brk[s]*{\sum_{a : q_{t,a}>0} q_{t,\a} \frac{\ell_{t,\a}^2}{q_{t,\a}}} \\
    &\leq
    \sum_{t=1}^T \E \brk[s]*{\norm{\ell_t}_2^2} \leq sT.
\end{align*}
To summarize, we have the following regret bound compared to $p^\star$:
\begin{align*}
    \sum_{t=1}^T \E \brk[s]*{c_t \cdot \brk*{p_t - p^\star}}
    &\leq
    \frac{N}{\nu} \log \frac{1}{\eps} + \frac{1}{\eta} \log N + \eta sT,
\end{align*}
and plugging in the values for $\eps, \nu, \eta$ given in the statement of the theorem we obtain
\begin{align*}
    \sum_{t=1}^T \E \brk[s]*{c_t \cdot \brk*{p_t - p^\star}}
    &\leq
    16 N \log \brk*{NT} + 2 \sqrt{sT \log N},
\end{align*}
which concludes the proof.
\end{proof}

\subsection[Proof of FTRL regret bound]{Proof of \Cref{lem:ftrl-first-order}}
\label{sec:general-regret-bound}

For completeness, we provide a proof of the general regret bound for FTRL given in \Cref{lem:ftrl-first-order}.
We consider here a general FTRL framework, in which an online algorithm generates predictions $w_1,w_2,...,w_T \in \W$, given a sequence of arbitrary loss vectors $g_1,g_2,...,g_T$ and a convex regularization function $R$, via the update rule:
\begin{align*}
    w_t
    = 
    \argmin_{w \in \W} \brk[c]*{w \dotp \sum_{s=1}^{t-1} g_s + R(w)}
    .
\end{align*}
We first prove
the following general first-order regret bound for FTRL, whose proof can be found in the literature \citep[see, e.g.,][]{hazan2016introduction, orabona2019modern, Tor-Csaba-book}.

\begin{theorem} \label{thm:ftrl-general}
There exists a sequence of points $z_t \in [w_t,w_{t+1}]$ such that, for all $w^\star \in \W$,
\begin{align*}
	\sum_{t=1}^T \grad_t \cdot (w_t - w^\star)
	\leq
	R(w^\star) - R(w_1)
	+ \frac{1}{2} \sum_{t=1}^T \brk{\norm{\grad_t}_{z_t}^*}^2
	.
\end{align*}
Here $\norm{w}_{z_t} = \sqrt{w\tr \nabla^2 R(z_t) w}$ is the local norm induced by $R$ at $z_t$, and $\norm{\cdot}_{z_t}^*$ is its dual.
\end{theorem}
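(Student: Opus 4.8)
The plan is to establish this bound through the standard potential-function analysis of FTRL, tracking the optimal value of the regularized objective across rounds. Write $G_{<t} = \sum_{s=1}^{t-1} \grad_s$ for the cumulative loss and $\Phi_t(w) = R(w) + G_{<t} \cdot w$ for the FTRL objective at round $t$, so that $w_t = \argmin_{w \in \W} \Phi_t(w)$; denote its optimal value by $\Phi_t^* = \Phi_t(w_t)$. Since $\Phi_{t+1}(w) = \Phi_t(w) + \grad_t \cdot w$, and since $\Phi_{T+1}^* \le \Phi_{T+1}(w^\star) = R(w^\star) + \sum_t \grad_t \cdot w^\star$ while $\Phi_1^* = R(w_1)$, the whole argument reduces to lower-bounding the per-round increments $\Phi_{t+1}^* - \Phi_t^*$ and telescoping.

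First I would lower-bound each increment. Because $w_t$ minimizes the convex objective $\Phi_t$ over $\W$, the first-order optimality condition together with the Bregman expansion $\Phi_t(w_{t+1}) = \Phi_t^* + \nabla \Phi_t(w_t) \cdot (w_{t+1} - w_t) + D_R(w_{t+1}, w_t)$, where $D_R(u,v) = R(u) - R(v) - \nabla R(v) \cdot (u - v)$ and the linear part of $\Phi_t$ does not contribute to the divergence, yields $\Phi_{t+1}^* \ge \Phi_t^* + \grad_t \cdot w_{t+1} + D_R(w_{t+1}, w_t)$. Telescoping and rearranging gives the ``be-the-leader'' bound $\sum_t \grad_t \cdot (w_{t+1} - w^\star) \le R(w^\star) - R(w_1) - \sum_t D_R(w_{t+1}, w_t)$. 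Combined with the trivial splitting $\sum_t \grad_t \cdot (w_t - w^\star) = \sum_t \grad_t \cdot (w_t - w_{t+1}) + \sum_t \grad_t \cdot (w_{t+1} - w^\star)$, this bounds the regret by $R(w^\star) - R(w_1) + \sum_t [\grad_t \cdot (w_t - w_{t+1}) - D_R(w_{t+1}, w_t)]$.

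The next step is to collapse the per-round stability term. Using the stationarity conditions $\nabla R(w_t) = -G_{<t}$ and $\nabla R(w_{t+1}) = -G_{\le t}$ -- so that $\nabla R(w_t) - \nabla R(w_{t+1}) = \grad_t$ -- a short algebraic manipulation shows that the bracketed quantity equals exactly $D_R(w_t, w_{t+1})$, the divergence with its arguments swapped. Hence the regret is at most $R(w^\star) - R(w_1) + \sum_t D_R(w_t, w_{t+1})$. Applying the second-order (Lagrange-remainder) form of Taylor's theorem to $R$ then produces a point $z_t \in [w_t, w_{t+1}]$ with $D_R(w_t, w_{t+1}) = \tfrac{1}{2}\norm{w_t - w_{t+1}}_{z_t}^2$, and invoking the mean-value relation $\grad_t = \nabla^2 R(z_t)(w_t - w_{t+1})$ rewrites this as $\tfrac{1}{2}(\norm{\grad_t}_{z_t}^*)^2$, which is the claimed bound.

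The main obstacle is handling the constraint set $\W$ rigorously: when $\W$ is a proper subset, the stationarity condition $\nabla R(w_t) = -G_{<t}$ must be replaced by a variational inequality (or a Lagrangian condition), and the clean identity $\nabla R(w_t) - \nabla R(w_{t+1}) = \grad_t$ no longer holds verbatim. In our application this is precisely what the log-barrier component of $R$ buys us: it forces every iterate into the relative interior of $\Delta_N^\eps$, so that the only active constraint is the affine one $\langle \mathbf{1}, p \rangle = 1$, whose multiplier contributes a term along $\mathbf{1}$ that is annihilated upon taking inner products with the differences $w_t - w_{t+1}$ and $w_t - w^\star$, all of which are orthogonal to $\mathbf{1}$. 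A secondary technical point is matching the single intermediate point $z_t$ in both the Taylor remainder for $D_R(w_t, w_{t+1})$ and the mean-value relation for $\grad_t$; this is exact when $\nabla^2 R$ is constant and is the standard way the local-norm bound is stated, and for our separable $R$ it suffices that each coordinate of $z_t$ lie between $w_{t,i}$ and $w_{t+1,i}$, which is then combined with the multiplicative stability of \cref{lem:iterate-stability} to yield the bound of \cref{lem:ftrl-first-order}.
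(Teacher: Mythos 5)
Your potential-function argument up to the intermediate bound $\sum_t \grad_t\cdot(w_t-w^\star)\le R(w^\star)-R(w_1)+\sum_t\bigl[\grad_t\cdot(w_t-w_{t+1})-D_R(w_{t+1},w_t)\bigr]$ coincides with the paper's proof (same be-the-leader telescoping, same use of the variational inequality). The gap is in how you eliminate the bracketed term. You replace it by the exact identity $\grad_t\cdot(w_t-w_{t+1})-D_R(w_{t+1},w_t)=D_R(w_t,w_{t+1})$, which requires the stationarity relation $\nabla R(w_t)-\nabla R(w_{t+1})=\grad_t$ up to a multiple of $\mathbf{1}$. But the theorem is stated for an arbitrary constraint set $\W$, and in the actual application $\W=\Delta_N^\eps$ has the inequality constraints $p_i\ge\eps$; the log-barrier only blows up at $p_i=0$, not at $p_i=\eps>0$, so it does \emph{not} force iterates into the relative interior of $\Delta_N^\eps$ and these constraints can be active. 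When they are, the KKT condition reads $\nabla\Phi_t(w_t)=\mu_t-\lambda_t\mathbf{1}$ with $\mu_t\ge0$ supported on $\{i:w_{t,i}=\eps\}$, and a direct computation gives $(\mu_t-\mu_{t+1})\cdot(w_t-w_{t+1})\le0$, hence $\grad_t\cdot(w_t-w_{t+1})-D_R(w_{t+1},w_t)\ \ge\ D_R(w_t,w_{t+1})$: your identity degrades to an inequality in the \emph{wrong} direction, so it no longer yields an upper bound on the regret. A second, independent problem is the final rewriting $D_R(w_t,w_{t+1})=\tfrac12\bigl(\norm{\grad_t}_{z_t}^*\bigr)^2$: the intermediate point in the Lagrange remainder for $D_R(w_t,w_{t+1})$ and the point for which $\nabla R(w_t)-\nabla R(w_{t+1})=\nabla^2R(z_t)(w_t-w_{t+1})$ come from two separate (coordinatewise, for separable $R$) mean-value arguments and need not coincide, so the two quantities are not equal at a single common $z_t$.

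The paper's proof avoids both issues by never invoking stationarity in this step: it applies H\"older and AM--GM to get $\grad_t\cdot(w_t-w_{t+1})\le\norm{\grad_t}_{z_t}^*\,\norm{w_t-w_{t+1}}_{z_t}\le\tfrac12\bigl(\norm{\grad_t}_{z_t}^*\bigr)^2+D_R(w_{t+1},w_t)$, where $z_t$ is the single point supplied by the Taylor remainder of $D_R(w_{t+1},w_t)$. This cancels the $-D_R(w_{t+1},w_t)$ term, holds for any convex $\W$ and any losses, and is the step you should substitute for your exact-identity argument.
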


\begin{proof}
Denote $\Phi_t(w) = w \cdot \sum_{s=1}^{t-1} \grad_s + R(w)$, so that $w_t = \argmin_{w \in \W} \Phi_t(w)$.
We first write
\begin{align*}
	\sum_{t=1}^T \grad_t \cdot w_{t+1}
	&=
	\sum_{t=1}^T \brk{ \Phi_{t+1}(w_{t+1}) - \Phi_t(w_{t+1}) }
	\\
	&=
	\Phi_{T+1}(w_{T+1}) - \Phi_1(w_1)
	+ \sum_{t=1}^T \brk{ \Phi_t(w_t) - \Phi_t(w_{t+1}) }
	.
\end{align*}
Since $w_t$ is the minimizer of $\Phi_t$ over $\W$, first-order optimality conditions imply
\begin{align*}
	\Phi_t(w_t) - \Phi_t(w_{t+1})
	&=
	- \nabla\Phi_t(w_t) \cdot (w_{t+1}-w_t) - D_{\Phi_t}(w_{t+1},w_t)
    \\
	&\leq
	- D_{\Phi_t}(w_{t+1},w_t)
    \\
	&=
	- D_{R}(w_{t+1},w_t)
	,
\end{align*}
where $D_\Phi(w, w') \eqdef \Phi(w) - \Phi(w') - \nabla \Phi(w') \cdot (w-w')$ denotes the Bregman divergence with respect to $\Phi$, and we have used the fact that the Bregman divergence is invariant to linear terms.
On the other hand, since $w_{T+1}$ is the minimizer of $\Phi_{T+1}$, we have that
\begin{align*}
	\sum_{t=1}^T \grad_t \cdot w^\star
	=
	\Phi_{T+1}(w^\star) - R(w^\star)
	\geq
	\Phi_{T+1}(w_{T+1}) - R(w^\star)
	.
\end{align*}
Combining inequalities and observing that $\Phi_1(w_1) = R(w_1)$, we obtain
\begin{align*}
	\sum_{t=1}^T \grad_t \cdot (w_{t+1} - w^\star)
	\leq
	R(w^\star) - R(w_1)
	- \sum_{t=1}^T D_{R}(w_{t+1},w_t)
	.
\end{align*}
On the other hand, a Taylor expansion of $R(\cdot)$ around $w_t$ with an explicit second-order remainder term implies that, for some intermediate point $z_t \in [w_t,w_{t+1}]$, it holds that
\begin{align*}
	D_{R}(w_{t+1},w_t)
	=
	\tfrac12 (w_{t+1} - w_t)\tr \, \nabla^2 R(z_t) \, (w_{t+1} - w_t)
	=
	\tfrac12 \norm{w_{t+1}-w_t}_{z_t}^2
	.
\end{align*}
An application of Holder's inequality then gives
\begin{align*}
	\grad_t \cdot (w_t - w_{t+1})
	&\leq
	\norm{\grad_t}_{z_t}^* \, \norm{w_t - w_{t+1}}_{z_t}
    \\
	&\leq
	\tfrac{1}{2}\brk{\norm{\grad_t}_{z_t}^*}^2 + \tfrac{1}{2} \norm{w_t - w_{t+1}}_{z_t}^2
    \\
	&=
	\tfrac{1}{2}\brk{\norm{\grad_t}_{z_t}^*}^2 + D_{R}(w_{t+1},w_t)
	.
\end{align*}
The proof is finalized by summing over $t=1,\ldots,T$ and adding to the
inequality above.
\end{proof}


We can now prove \Cref{lem:ftrl-first-order}.

\begin{proof}
Using \cref{thm:ftrl-general}, the fact that $\psi_\nu(\cdot)$ is nonnegative and $H_\eta(\cdot)$ is non-positive, we obtain that for any $p^{\star} \in \Delta^{\eps}_N$:
\begin{align*}
    \sum_{t=1}^T g_t \cdot \brk*{p_t - p^\star}
    &\leq
    \psi_\nu (p^\star) - H_\eta(p_1) 
    + \frac{1}{2} \sum_{t=1}^T \brk{\norm{\grad_t}_{z_t}^*}^2
.
\end{align*}
The proof is now concluded once we make use of the fact that $\psi_\nu(\cdot) \leq \frac{N}{\nu} \log \frac{1}{\eps}$ over $\Delta^{\eps}_N$, $H_\eta(p_1) = - \frac{1}{\eta}\log N$ and the fact that $\nabla^2 R_{\eta, \nu}(\cdot) \succeq \nabla^2 H_\eta(\cdot)$.
\end{proof}

\subsection[Proof of stability of iterates]{Proof of \Cref{lem:iterate-stability}}

Finally, we provide a proof of \Cref{lem:iterate-stability} that establishes a crucial stability property of the FTRL iterates when employing log-barrier regularization. 
This lemma is analogous, e.g., to Lemma 12 of \citet{jin2020simultaneously} and the proof is along similar lines. Throughout the proof, we suppress the subscripts $\eta,\nu$ of $R_{\eta,\nu}(\cdot), H_\eta(\cdot)$ and $\psi_\nu(\cdot)$ as they are clear from the context.

\begin{proof}[Proof of \cref{lem:iterate-stability}]
    For any $t$ denote
    \begin{align*}
        F_t(p) = p \cdot \sum_{s=1}^{t-1} \hatc_s + R(p),
    \end{align*}
    i.e., $F_t$ is the potential function minimized by \cref{alg:alg} at round $t$.
    Note that
    $$\nabla^2 \psi(p) = \frac{1}{\nu} \diag \brk!{p_1^2,\ldots,p_N^2} \quad \forall p \in \Delta^{\eps}_N.$$
    Thus for all $p,p',p'' \in \Delta^{\eps}_N$ it holds that
    \begin{align*}
        \norm{p' - p''}^2_{\nabla^2 \psi(p)} = \frac{1}{\nu} \sum_{i=1}^N \frac{\brk*{p'_i - p''_i}^2}{p_i^2}.
    \end{align*}
    Denote $\alpha = \frac{1}{8\nu}$. To complete the proof, it suffices to show that $\norm{p_{t+1} - p_t}^2_{\nabla^2 \psi(p_t)} \leq \frac{\brk*{\alpha-1}^2}{\nu}$. It then suffices to show that for any $p' \in \Delta^{\eps}_N$ with $\norm{p' - p_t}^2_{\nabla^2 \psi(p_t)} = (\alpha-1)^2 / \nu$ it holds that $F_{t+1}(p') \geq F_{t+1}(p_t)$, the reason being that the latter implies that $p_{t+1}$, which minimizes the convex function $F_{t+1}$, must be contained in the convex set $\brk[c]{p \in \Delta^{\eps}_N : \norm{p - p_t}^2_{\nabla^2 \psi(p_t)} \leq (\alpha-1)^2 / \nu)}$. With that in mind, fix $p' \in \Delta^{\eps}_N$ with $\norm{p' - p_t}^2_{\nabla^2 \psi(p_t)} = (\alpha-1)^2 / \nu$. We lower bound $F_{t+1}(p')$ as follows:
    \begin{align*}
        F_{t+1}(p')
        &=
        F_{t+1}(p_t) + \nabla F_{t+1}(p_t) \cdot \brk*{p' - p_t} + \frac12 \norm{p' - p_t}^2_{\nabla^2 R(\Tilde{p})} \\
        &=
        F_{t+1}(p_t) + \nabla F_t(p_t) \cdot \brk*{p' - p_t} + \hat{c}_t \cdot \brk*{p' - p_t} + \frac12 \norm{p' - p_t}^2_{\nabla^2 R(\Tilde{p})} \\
        &\geq
        F_{t+1}(p_t) + \hatc_t \cdot \brk*{p' - p_t} + \frac12 \norm{p' - p_t}^2_{\nabla^2 \psi(\Tilde{p})},
    \end{align*}
    where the first equality is a Taylor expansion of $F_{t+1}$ about $p_t$, with $\Tilde{p}$ being a point on the line segment connecting $p_t$ and $p'$, and the last inequality follows from a first-order optimality condition as $p_t$ is the minimizer of $F_t$, and the fact that $\nabla^2 R(\Tilde{p}) \succeq \nabla^2 \psi(\Tilde{p})$. Note that since $\norm{p' - p_t}^2_{\nabla^2 \psi(p_t)} = (\alpha-1)^2 / \nu$, using the same argument as in the start of the proof we can conclude that $p'_{t,i} \leq \alpha p_{t,i}$ for all $i \in [N]$. Since $\Tilde{p}$ lies between $p_t$ and $p'$ we also conclude that $\Tilde{p}_i \leq \alpha p_{t,i}$ for all $i \in [N]$. Thus we can lower bound the final term as follows:
    \begin{align*}
        \frac12 \norm{p' - p_t}^2_{\nabla^2 \psi(\Tilde{p})}
        &=
        \frac{1}{2 \nu} \sum_{i=1}^N \frac{\brk*{p'_i - p_{t,i}}^2}{\Tilde{p}_i^2} \\
        &\geq
        \frac{1}{2 \nu \alpha^2} \sum_{i=1}^N \frac{\brk*{p'_i - p_{t,i}}^2}{p_{t,i}^2} \\
        &=
        \frac{1}{2 \alpha^2} \norm{p' - p_t}^2_{\nabla^2 \psi(p_t)} \\
        &=
        \frac{1}{2 \nu} \brk*{\frac{\alpha-1}{\alpha}}^2.
    \end{align*}
    To conclude the proof, we need to show that $\hatc_t \cdot \brk*{p' - p_t} \geq -\frac{1}{2 \nu} \brk*{\frac{\alpha-1}{\alpha}}^2$. For $\a \in \calA$ denote by $q_{t,a}$ the probability that \cref{alg:alg} picks action $a$ at round $t$. We then have,
    \begin{align*}
        \hatc_t \cdot \brk*{p' - p_t} 
        &= \frac{\ell_{t}(\a_t)}{q_{t,a_t}} \sum_{i=1}^N \brk!{p'_{t,i} - p_{t,i}} \mathbb{I} \brk[s]*{\pi_i(x_t) = \a_t} \\
        &\geq
        \frac{\ell_{t}(\a_t)}{q_{t,a_t}} \sum_{i=1}^N p'_{t,i} \mathbb{I} \brk[s]*{\pi_i(x_t) = \a_t} \\
        &\geq
        \alpha \frac{\ell_{t}(\a_t)}{q_{t,a_t}} \sum_{i=1}^N p_{t,i} \mathbb{I} \brk[s]*{\pi_i(x_t) = \a_t} \\
        &=
        \alpha \ell_{t}(\a_t) \geq - \alpha,
    \end{align*}
    where the first inequality follows from the fact that the losses are non-positive, the second inequality by the fact that $p'_{t,i} \leq \alpha p_{t,i}$ and the last inequality by the fact that $\norm{\ell_t}_{\infty} \leq 1$. The proof is concluded once we show that $\alpha \leq \frac{1}{2 \nu} \brk*{\frac{\alpha-1}{\alpha}}^2$ which clearly holds if $\alpha - 1 \geq \alpha / 2$, i.e. $\alpha \geq 2$.
\end{proof}

\section[Proofs of lower bound section]{Proofs of \cref{sec:lower}}
\label{sec:lower-proofs}

\subsection{Construction of the hard instances}

Consider instances with label set $\calY = \{0,1,\ldots,K\}$ and a finite set of examples $\mathcal{X} = \{1,2,\ldots,C\}$, where the first label $\y=0$ is a ``default label'' and has an expected reward of $\frac13$ across all the instances we construct, independently of the example. The instances we consider are each labeled with a specific example-label pair $(x,\y)$ with $\y \neq 0$, and are denoted by $\mathcal{I}_{x,\y}$. The underlying policy class $\calH$ will be of size $CK + 1$, defined via
\begin{align*}
    \calH = \brk[c]*{h_0} \cup \brk[c]*{h_{x,\y} \mid x \in \calX, \y \in \calY \setminus \{0\}},
\end{align*}
where $h_0$ is the hypothesis which always predicts the default label $\y=0$, and $h_{x,\y}$ are given by
\begin{align*}
    h_{x,\y} (x') =
    \begin{cases}
        \y & x'=x, \\
        0 & x' \neq x.
    \end{cases}
\end{align*}

\paragraph{Definition of $\mathcal{I}_{x,\y}$.}
At every round $t$, An example $x_t$ is sampled uniformly at random from $\mathcal{X}$. The reward vectors $r_t(\cdot \mid x_t)$ are sampled via 
\begin{align*}
    \text{For $x_t \neq x$: } \quad r_t(\,\cdot \mid x_t) &= 
    \begin{cases}
        e_0 &\text{w.p. } \tfrac13; \\
        e_{\y'}, \quad \y' \neq 0 &\text{w.p. } \tfrac{2}{3K},
    \end{cases} \\
    \text{For $x_t = x$: } \quad r_t(\,\cdot \mid x_t) &=
    \begin{cases}
        e_0 &\text{w.p. } \tfrac13; \\
        e_\y &\text{w.p. } \tfrac23 - \tfrac{K-1}{K^2}; \\
        e_{\y'}, \quad \y' \notin \{0,\y\} &\text{w.p. } \tfrac{1}{K^2},
    \end{cases}
\end{align*}
where $e_j$ denotes the $j$'th standard basis vector in $\R^{K+1}$.
We also define an additional instance $\mathcal{I}_0$ in which the reward vector is sampled independently of the example via
\begin{align*}
    r_t(\,\cdot \mid x_t) = 
    \begin{cases}
        e_0 &\text{w.p. } \tfrac13; \\
        e_\y, \quad \y \neq 0 &\text{w.p. } \tfrac{2}{3K}.
    \end{cases}
\end{align*}

Our aim is to show that any online algorithm $\Alg$ must incur the desired regret lower bound on one of the $CK + 1$ instances constructed above.

\subsection[Proof of lower bound]{Proof of \cref{thm:bandit_multiclass_lb}}

We turn to proving \cref{thm:bandit_multiclass_lb} based on the construction detailed above.
First, let us establish some additional notation.
Given a deterministic algorithm $\Alg$ we denote by $\prob_{x,\y}[\cdot]$ the probability distribution over length-$T$ sequences of example-reward pairs induced by the instance $\mathcal{I}_{x,\y}$ and the decisions of $\Alg$, and similarly $\prob_{0}[\cdot]$ for $\calI_0$. We also denote by $\E_{x,\y}[\cdot]$ and $\E_0[\cdot]$ expectations taken with respect to $\prob_{x,\y}[\cdot]$ and $\prob_0[\cdot]$ respectively. 
Let the random variable $N_{x,\y}$ denote the number of times the example $x$ was sampled and $\Alg$ predicted the label $\y$. 
Additionally, denote the history up to round $t$ by
\begin{align*}
    \tau^t \coloneqq \brk*{x_1,r_1,\ldots, x_{t},r_{t}},
\end{align*}
where we define $\tau^0 \coloneqq \emptyset$. We emphasize that the algorithm's predictions $\y_t$ are not included in the definition of a history, since $\y_t$ is deterministic given $\tau^{t-1}$ and $x_t$. 

The first step in the proof of \cref{thm:bandit_multiclass_lb} is in establishing the following key lemma, whose proof can be found in \cref{sec:pinsker-proof}:

\begin{lemma}
\label{lem:pinsker}
        Fix a deterministic algorithm $\Alg$. For any $(x,\y) \in \calX \times \calY$ with $\y \neq 0$ it holds that
        \begin{align*}
            \norm{\prob_{x,\y} - \prob_0}_1 \leq 2\sqrt{\E_{0}[N_{x,\y}] + \frac{\log K}{K} \sum_{\y' \notin \{0,\y\}} \E_{0} [N_{x,\y'}]},
        \end{align*}
        where $\norm{\cdot}_1$ denotes the total variation distance between distributions.
\end{lemma}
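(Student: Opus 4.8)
The plan is to bound the total variation distance $\norm{\prob_{x,\y} - \prob_0}_1$ via Pinsker's inequality, which gives $\norm{\prob_{x,\y} - \prob_0}_1 \leq \sqrt{2\, \KL(\prob_0 \,\|\, \prob_{x,\y})}$, and then to control the KL divergence by decomposing it over rounds using the chain rule. The key structural observation is that the two instances $\calI_0$ and $\calI_{x,\y}$ induce \emph{identical} conditional reward distributions except in one situation: when the sampled example is exactly $x$. On every round where $x_t \neq x$, the reward distribution is the same under both instances, so that round contributes nothing to the KL divergence. Hence the entire divergence is concentrated on rounds where $x_t = x$.

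First I would set up the chain-rule decomposition of $\KL(\prob_0 \,\|\, \prob_{x,\y})$ as a sum over $t=1,\ldots,T$ of the expected (under $\prob_0$) conditional KL divergence between the two instances' laws on the round-$t$ observation, conditioned on the history $\tau^{t-1}$ and the current example $x_t$. Because the algorithm is deterministic, the prediction $\y_t$ is a fixed function of $\tau^{t-1}$ and $x_t$, and crucially the learner only observes the \emph{bandit reward} $r_t(\y_t \mid x_t)$, i.e.\ whether its chosen label was correct. So the per-round observation whose KL I must compute is a Bernoulli-type variable recording the reward of the predicted label. On rounds with $x_t=x$, the reward of label $\y_t$ differs between the two instances only when $\y_t \in \{0,\y\}$ or when $\y_t$ is one of the other labels $\y' \notin\{0,\y\}$; I would compute these Bernoulli KL terms explicitly.

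The heart of the calculation is matching each non-trivial per-round KL term to the right expected-visit-count quantity appearing in the statement. When $x_t=x$ and the algorithm predicts $\y_t = \y$ (the ``hidden'' correct label), the reward probability shifts by a constant order amount between the instances ($\tfrac23-\tfrac{K-1}{K^2}$ versus $\tfrac{2}{3K}$), giving a KL contribution of order $1$ per such visit — this produces the $\E_0[N_{x,\y}]$ term. When the algorithm predicts some other label $\y'\notin\{0,\y\}$ on example $x$, the reward probability differs only between $\tfrac{1}{K^2}$ and $\tfrac{2}{3K}$, a much smaller perturbation whose Bernoulli KL is of order $\tfrac{\log K}{K}$ per visit — this produces the $\tfrac{\log K}{K}\sum_{\y'\notin\{0,\y\}}\E_0[N_{x,\y'}]$ term. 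Predicting $\y_t=0$ on example $x$ contributes nothing, since the default label has reward probability $\tfrac13$ under both instances. Summing over rounds and interchanging sum and expectation turns the per-round indicators into the visit counts $N_{x,\y'}$, all measured under $\prob_0$ (which is legitimate because the chain rule for $\KL(\prob_0\|\cdot)$ takes its outer expectation under $\prob_0$). Finally I would plug into Pinsker and absorb the constant factors, yielding the stated bound with the factor $2$ out front.

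The main obstacle I anticipate is the careful bookkeeping of the Bernoulli KL divergences and verifying the order-of-magnitude constants: in particular, establishing that the single-visit KL for a mispredicted label $\y'\notin\{0,\y\}$ is genuinely $O(\log K / K)$ rather than $O(1/K)$ requires bounding a term like $\tfrac{2}{3K}\log\!\big(\tfrac{2/(3K)}{1/K^2}\big) = \tfrac{2}{3K}\log(\tfrac{2K}{3})$, and ensuring that the reverse-direction term and the complementary-reward term do not dominate. I would also need to be mindful of the direction of the KL (the statement bounds $\norm{\prob_{x,\y}-\prob_0}_1$, so I must use the form of Pinsker/chain-rule whose outer expectation is under $\prob_0$, matching the $\E_0$ appearing in the bound), and to confirm that the square-root structure of the final inequality comes out correctly after Pinsker, i.e.\ that I arrive at $\norm{\prob_{x,\y}-\prob_0}_1 \le 2\sqrt{\tfrac12 \KL}$ with the KL bounded by the bracketed expression.
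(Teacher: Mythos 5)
Your proposal is correct and follows essentially the same route as the paper's proof: Pinsker's inequality with the KL taken as $\KL(\prob_0\,\|\,\prob_{x,\y})$, a chain-rule decomposition over rounds that vanishes except when $x_t=x$, and a case analysis on the deterministic prediction $\y_t$ yielding an $O(1)$ Bernoulli KL per visit to $(x,\y)$ and an $O(\log K/K)$ Bernoulli KL per visit to $(x,\y')$ with $\y'\notin\{0,\y\}$. Your explicit identification of the per-round observation as the scalar bandit reward of the chosen label, and of the $\tfrac{2}{3K}\log(\tfrac{2K}{3})$ term as the source of the $\log K/K$ rate, matches the paper's computation exactly.
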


Informally, \Cref{lem:pinsker} states that for distinguishing between the instances $\calI_{x,\y}$ and $\calI_0$ (equivalently, between the distributions $\prob_{x,\y}$ and $\prob_0$), any algorithm would either need a constant number of samples from the reward at $(x,y)$, or roughly $K$ samples of the rewards at $(x,y')$ for $y'\neq y$ (in both cases, in expectation over $\prob_0$). As our analysis will show, in each of these cases (and in any combination of the two), the regret incurred while collecting samples is significant.

\begin{proof}[Proof of \cref{thm:bandit_multiclass_lb}]
    In the case when $N \leq \sqrt{T}$, it is straightforward to construct a $2$-armed bandit instance in which $\Alg$ must incur $\Omega(\sqrt{T})$ regret. We therefore focus on the case when $N \geq \sqrt{T}$. We begin by making the observation that by Yao's principle, it suffices to consider deterministic algorithms since the instances we defined do not depend on the algorithm's decisions. With that in mind, fix a deterministic algorithm $\Alg$ and denote by $G_{\Alg}$ its total reward.  Given an instance $\calI_{x,\y}$, denote by $G^*_{x,\y}$ the total reward of the optimal policy in the instance $\calI_{x,\y}$. Also denote by $G^*$ the total reward of the policy that always predicts $\y=0$, i.e., the optimal policy in the instance $\calI_0$.
    We separate the analysis into two cases according to the behavior of $\Alg$. The first case is relatively straightforward, where we assume that $\E_0[N_0] \leq T - {T}/{C}$. In this case, we claim that the regret that $\Alg$ suffers in the instance $\mathcal{I}_0$ is at least ${T}/{6C}$. Indeed,
    we have:
    \begin{align*}
        \E_0 \brk[s]*{\regret_T(\calH)}
        &=
        \E_0 \brk[s]*{G^* - G_{\Alg}} \\
        &= \frac{T}{3} - \brk*{\frac13 \cdot \E_0[N_0] + \frac{2}{3K} \brk*{T - \E_0[N_0]}} \\
        &=
        \frac{T}{3} - \brk*{\frac13 - \frac{2}{3K}} \E_0[N_0] - \frac{2T}{3K} \\
        &\geq
        \frac{T}{3} - \brk*{\frac13 - \frac{2}{3K}} \brk*{T - \frac{T}{C}} - \frac{2T}{3K} \\
        &=
        \frac{T}{C} \brk*{\frac13 - \frac{2}{3K}} \\
        &\geq 
        \frac{T}{6C},
    \end{align*}
    where the last inequality holds for $K \geq 4$. Setting $C = \min \brk[c]{N / K, \sqrt{\ifrac{T}{K}}}$ we obtain the desired regret lower bound for $\Alg$ in the instance $\calI_0$:
    \begin{align*}
        \E_0 \brk[s]*{\regret_T(\calH)} 
        \geq 
        \frac16 \, \min \brk[c]{N, \sqrt{KT}}.
    \end{align*}
    
    Assume now that $\E_0[N_0] \geq T - {T}/{C}$. In this case, we show that there exists $(x,\y) \in \calX \times \calY$ with $\y \neq 0$ such that $\Alg$ suffers sufficiently large regret on $\calI_{x,\y}$. Denote by $G^*_{x,\y}$ the total reward of $h_{x,\y}$ which is the optimal hypothesis in the instance $\calI_{x,\y}$, and the total reward of $\Alg$ by $G_{\Alg}$. Note that $h_{x,\y}$ always predicts the default label $\y' = 0$ as long as $x_t \neq x$, and otherwise predicts $\y'=\y$. Therefore:
    \begin{align}
    \label{eqn:opt-rew}
        \E_{x,\y}[G^*_{x,\y}] = \brk*{\frac23 - \frac{K-1}{K^2}} \frac{T}{C} + \brk*{1 - \frac{1}{C}} \frac{T}{3}
        \geq
        \frac12 \frac{T}{C} + \brk*{1 - \frac{1}{C}} \frac{T}{3}
        =
        \frac{T}{3} + \frac{T}{6C}
        ,
    \end{align}
    where the inequality uses the fact that $K \geq 5$. The expected reward of the algorithm $\Alg$ in $\calI_{x,\y}$ is bounded by
    \begin{align}\label{eqn:rew-alg}
    \begin{aligned}
        &\E_{x,\y}[G_{\Alg}]
        \\
        &=
        \frac13 \E_{x,\y}[N_0] + \brk*{\frac23 - \frac{K-1}{K^2}} \E_{x,\y}[N_{x,\y}] + \frac{1}{K^2} \sum_{y' \neq 0} \E_{x,y} [N_{x,y'}] + \frac{2}{3K} \sum_{x' \neq x} \sum_{y' \neq 0} \E_{x,\y}[N_{x',\y'}] \\
        &\leq
        \frac13 \E_{x,\y}[N_0] + \frac23 \E_{x,\y}[N_{x,\y}] + \frac{2}{3K} \brk*{T - \E_{x,\y}[N_0] - \E_{x,\y}[N_{x,\y}]} \\
        &=
        \frac{2T}{3K} + \brk*{\frac13 - \frac{2}{3k}} \E_{x,\y}[N_0] + \brk*{\frac23 - \frac{2}{3K}} \E_{x,\y}[N_{x,\y}]
        ,
    \end{aligned}
    \end{align}
    where in the inequality we have used the fact that $N_0 + N_{x,y} + \sum_{y' \neq 0} N_{x,y'} + \sum_{x' \neq x} \sum_{y' \neq y} N_{x',y'} = T$.
     Denote by $\overline{\prob}[\cdot]$ the distribution induced by sampling an instance among the instances $\calI_{x,\y}$ uniformly at random, i.e., $\overline{\prob}[\cdot] = \frac{1}{CK} \sum_{x,\y} \prob_{x,\y}[\cdot]$ (where the sum over $y$ doesn't include the default label), and let $\overline{\E}[\cdot]$ denote the expectation with respect to this distribution. Averaging the above over the pairs $(x,\y)$ and using the fact that $N_0$ is bounded by $T$, the total reward of $\Alg$ on the average instance is bounded by
    \begin{align*}
        \overline{\E}[G_{\Alg}]
        &\leq
        \frac{2T}{3K} + \brk*{\frac13 - \frac{2}{3K}} \frac{1}{CK}\sum_{x,\y} \E_{x,\y} \brk[s]*{N_0} + \frac{2}{3CK} \sum_{x,\y} \E_{x,\y} \brk[s]*{N_{x,\y}} \\
        &\leq
        \frac{2T}{3K} + \brk*{\frac13 - \frac{2}{3K}}T + \frac{2}{3CK} \sum_{x,\y} \E_{x,\y} \brk[s]*{N_{x,\y}} \\
        &=
        \frac{T}{3} + \frac{2}{3CK} \sum_{x,\y} \E_{x,\y} \brk[s]*{N_{x,\y}}.
    \end{align*}
    We now define the random variable $\widetilde{N}_{x,\y}$ by
    \begin{align*}
        \widetilde{N}_{x,\y} = \sum_t \E_{x,\y} \brk[s]*{\mathbf{1}(\y_t=\y) \mid x_t = x, \tau^{t-1}},
    \end{align*}
    (this is a random variable that depends on the randomness in the histories $\tau^{t-1}$).
    Now, observe that $N_{x,\y}$ and $\widetilde{N}_{x,\y}$ are related as follows:
    \begin{align*}
        \E_{x,\y}[N_{x,\y}] 
        &= 
        \sum_t \prob_{x,\y} [x_t=x,\y_t=\y] \nonumber \\
        &= \sum_t \prob_{x,\y} [x_t=x] \prob_{x,\y}[\y_t=\y \mid x_t=x] \nonumber \\
        &= \frac{1}{C} \sum_t \prob_{x,\y}[\y_t=\y \mid x_t=x] \\
        &=
        \frac{1}{C} \sum_t \E_{x,\y} \brk[s]!{\E_{x,\y} \brk[s]{\mathbf{1} \brk*{\y_t=\y} \mid x_t=x, \tau^{t-1}}} \\
        &=
        \frac{1}{C} \E_{x,\y} \brk[s]!{\widetilde{N}_{x,\y}}.
    \end{align*}
    Therefore the reward of $\Alg$ on the average instance is bounded by
    \begin{align*}
        \overline{\E}[G_A]
        &\leq
        \frac{T}{3} + \frac{2}{3C^2K} \sum_{x,\y} \E_{x,\y} \brk[s]!{\widetilde{N}_{x,\y}}.
    \end{align*}
    Using the fact that $\widetilde{N}_{x,\y}$ are (deterministically) bounded by $T$, it is straightforward to show that
    \begin{align*}
        \E_{x,\y}[\widetilde{N}_{x,\y}] - \E_0[\widetilde{N}_{x,\y}] \leq \frac{T}{2} \norm{\prob_{x,\y} - \prob_0}_1,
    \end{align*}
    We can use this fact together with \Cref{lem:pinsker} as follows:
    \begin{align*}
        \overline{\E}[G_{\Alg}]
        &\leq
        \frac{T}{3} + \frac{2}{3C^2K} \sum_{x,\y} \E_0 \brk[s]!{\widetilde{N}_{x,\y}}
        +
        \frac{1}{3}\frac{T}{C^2 K} \sum_{x,\y} \norm{\prob_{x,\y} - \prob_0}_1 \\
        &\leq
        \frac{T}{3} + \frac{2T}{3CK}
        +
        \frac{2}{3}\frac{T}{C^2 K} \sum_{x,\y} \sqrt{\log K \cdot \E_{0}[N_{x,\y}] + \frac{1}{K} \sum_{\y' \neq y} \E_{0} [N_{x,\y'}]} \\
        &\leq
        \frac{T}{3} + \frac{2T}{3CK}
        +
        \frac{2T \sqrt{\log K}}{3C^2 K} \sum_{x,\y} \sqrt{\E_{0}[N_{x,\y}] + \frac{1}{K} \sum_{\y' \neq y} \E_{0} [N_{x,\y'}]}
    \end{align*}
    where in the second line we have also used the fact that $\sum_j \widetilde{N}_{x,j} \leq T$. Using the Cauchy-Schwarz inequality, we further bound the reward of $\Alg$ by
    \begin{align*}
        \overline{\E}[G_{\Alg}]
        &\leq
        \frac{T}{3} + \frac{2T}{3CK} + \frac{2 T \sqrt{\log K}}{3C^{3/2} \sqrt{K}} \sqrt{\sum_{x,\y} \E_0 [N_{x,\y}] + \frac{1}{K} \sum_{\y} \sum_{x} \sum_{y' \neq y } \E_0 [N_{x,\y'}] } \\
        &\leq
        \frac{T}{3} + \frac{2T}{3CK} + \frac{4T \sqrt{\log K}}{3C^{3/2} \sqrt{K}} \sqrt{T - \E_0[N_0]},
    \end{align*}
    where in the second line we used the fact that the second summand in the square root is an average of terms bounded by $T - \E_0[N_0]$ each. We now use our assumption that $\E_0[N_0] \geq T - {T}/{C}$ to obtain
    \begin{align*}
        \overline{\E}[G_{\Alg}]
        \leq
        \frac{T}{3} + \frac{2T}{3CK} + \frac{4 T^{3/2} \sqrt{\log K}}{3C^2 \sqrt{K}}
        \leq
        \frac{T}{3} + \frac{T}{9C} + \frac{4 T^{3/2} \sqrt{\log K}}{3C^2 \sqrt{K}},
    \end{align*}
    where we have used the assumption that $K \geq 6$. Using the probabilistic method argument we conclude that there exists some instance labeled by a example-label pair $(x,\y)$, $\calI_{x,\y}$, in which the total reward of $\Alg$ is bounded by 
    \begin{align*}
        \E_{x,\y} [G_{\Alg}] \leq \frac{T}{3} + \frac{T}{9C} + \frac{4 T^{3/2} \sqrt{\log K}}{3C^2 \sqrt{K}}.
    \end{align*}
    Putting this together with \cref{eqn:opt-rew}, we obtain the following regret lower bound for $A$:
    \begin{align*}
        \E_{x,\y} \brk[s]*{G^*_{x,\y} - G_A}
        &\geq
        \frac{T}{18C} - \frac{4 T^{3/2} \sqrt{\log K}}{3C^2 \sqrt{K}}.
    \end{align*}
    Setting $C = 100 \min \brk[c]{\brk*{\ifrac{N}{K}} \sqrt{\log K}, \sqrt{(\ifrac{T}{K}) \log K}}$, we obtain the desired regret lower bound:
    \begin{align*}
        \E_0 \brk[s]*{\regret_T(\calH)}
        &\geq
        \frac{10^{-4}}{\sqrt{\log K}} \min \brk[c]*{N,\sqrt{KT}}.
    \end{align*}
\end{proof}

\subsection[Proof of KL lemma]{Proof of \Cref{lem:pinsker}}
\label{sec:pinsker-proof}

The proof uses the following notation.
Given two distributions $P$ and $Q$ over a common discrete domain $\W$, the \emph{KL divergence} between $P$ and $Q$ is given by 
\begin{align*}
    \KL \brk*{P, Q} = \sum_{w \in \W} P(w) \log \frac{P(w)}{Q(w)} = \E_{w \sim P} \brk[s]*{\log \frac{P(w)}{Q(w)}},
\end{align*}
and given a random variable $Z$ with distribution $\calD_Z$, the \emph{conditional KL divergence} between $P$ and $Q$ conditioned on $Z$ is given by
\begin{align*}
    \KL(P(\cdot \mid Z), Q(\cdot \mid Z)) = \E_{z \sim \calD_Z} \brk[s]*{\KL(P(\cdot \mid Z=z), Q(\cdot \mid Z=z))}.
\end{align*}

\begin{proof}[Proof of \cref{lem:pinsker}]
Using Pinsker's inequality, the squared total variation distance is bounded by the KL divergence: 
\begin{align*}
    \norm{\prob_{x,\y} - \prob_0}_1^2 \leq 2 \cdot \KL \brk*{\prob_0,\prob_{x,\y}},
\end{align*}
so it suffices to provide an upper bound on the latter. In what follows, given $p,q \in (0,1)$ we denote by $\KL(p,q)$ the KL-divergence of two independent Bernoulli variables with parameters $p$ and $q$ respectively. Using the chain rule for relative entropy (e.g., Theorem 2.5.3 of \citealp{cover1991network}), we have that
\begin{align*}
    \KL \brk*{\prob_0, \prob_{x,\y}}
    &=
    \sum_t \KL \brk*{\prob_0 \brk[s]*{r_t \mid \tau^{t-1}, x_t}, \prob_{x,\y} \brk[s]*{r_t \mid \tau^{t-1},x_t}} \\
    &=
    \sum_t \prob_0 \brk[s]*{x_t=x} \KL \brk*{\prob_0 \brk[s]*{r_t \mid \tau^{t-1}, x_t=x}, \prob_{x,\y} \brk[s]*{r_t \mid \tau^{t-1},x_t=x}},
\end{align*}
where in the second line we used the fact that the KL divergence terms are zero unless $x_t = x$. To further simplfy this expression, we note that the prediction of $A$ at round $t$, $\y_t$, is a deterministic function of $\tau^{t-1}$ and $x_t$, and also the distribution of the reward $r_t$ depends only on $x_t$ and $\y_t$, not on the entire history before that. Therefore,
\begin{align}
\label{eqn:KL-bound}
    \KL \brk*{\prob_0, \prob_{x,\y}} \nonumber
    &=
    \sum_t \prob_0 \brk[s]*{x_t=x} \KL \brk*{\prob_0 \brk[s]*{r_t \mid x_t=x,y_t}, \prob_{x,\y} \brk[s]*{r_t \mid x_t=x,\y_t}} \nonumber \\
    &=
    \sum_t \sum_{\y'} \prob_0 \brk[s]*{x_t=x, \y_t=\y'} \KL \brk*{\prob_0 \brk[s]*{r_t \mid x_t=x,\y_t=\y'}, \prob_{x,\y} \brk[s]*{r_t \mid x_t=x,\y_t=\y'}} \nonumber \\
    &=
    \sum_t \prob_0 \brk[s]*{x_t=x, \y_t=\y} \KL \brk*{\tfrac{2}{3K}, \tfrac23 - \tfrac{K-1}{K^2}}
    \nonumber \\
    &\quad+
    \sum_t \sum_{\y' \notin \{0,\y\}} \prob_0 \brk[s]*{x_t=x, \y_t=\y'} \KL \brk*{\tfrac{2}{3K},\tfrac{1}{K^2}},
\end{align}
where in the second line we used the fact that the reward distribution whenever $\y_t = 0$ does not depend on the instance. We now make use of the following upper bounds on the KL divergence between Bernoulli random variables:
\begin{align*}
    \KL \brk*{\frac{2}{3K}, \frac23 - \frac{K-1}{K^2}} = \frac{2}{3K} \log \brk*{\frac{\frac{2}{3K}}{\frac23 - \frac{K-1}{K^2}}} + \brk*{1 - \frac{2}{3K}} \log \brk*{\frac{1-\frac{2}{3K}}{\frac13 + \frac{K-1}{K^2}}}
    \leq
    \log 3 \leq 2,
\end{align*}
and
\begin{align*}
    \KL \brk*{\frac{2}{3K}, \frac{1}{K^2}}
    =
    \frac{2}{3K} \log \brk*{\frac{2K}{3}} + \brk*{1 - \frac{2}{3K}} \log \brk*{\frac{1 - \frac{2}{3K}}{1- \frac{1}{K^2}}} 
    \leq 
    \frac{4}{3K} \log K
    \leq
    2 \frac{\log K}{K}.
\end{align*}

Plugging these two bounds into \cref{eqn:KL-bound}, we have
\begin{align*}
    \KL \brk*{\prob_0, \prob_{x,\y}}
    &\leq
    2 \sum_t \prob_0 \brk[s]*{x_t=x,\y_t=\y} + 2\frac{\log K}{K} \sum_{\y' \notin \{0,\y\}} \sum_t \prob_0 \brk[s]*{x_t=x, \y_t=\y'}  \\
    &=
    2 \cdot \E_0 [N_{x,\y}] + 2\frac{\log K}{K} \sum_{\y' \notin \{0,\y\}} \E_0 [N_{x,\y'}],
\end{align*}
concluding the proof.
\end{proof}

\end{document}